\documentclass[11pt,letterpaper]{article}
\usepackage[margin=1in]{geometry}
\usepackage{hyperref}
\hypersetup{
     colorlinks   = true,
     linkcolor = black,
     urlcolor    = teal,
	 citecolor = teal
}
\usepackage[margin=1in]{geometry}
\usepackage{times}
\usepackage[dvipsnames]{xcolor}
\usepackage{tikz}
\usepackage{framed}
\usepackage{soul}
\usepackage{placeins}

\usepackage{natbib}
\usepackage[mathscr]{eucal}

\usepackage{framed}
\usepackage{amsmath,amsfonts,amsbsy,amsgen,amscd,mathrsfs,amssymb,wrapfig,dsfont,authblk}
\usepackage[colorinlistoftodos]{todonotes}
\usepackage[utf8]{inputenc} %
\usepackage[T1]{fontenc}    %

\usepackage[nameinlink]{cleveref}
\usepackage{booktabs}       %
\usepackage{nicefrac}       %
\usepackage{microtype}      %
\usepackage[framemethod=default]{mdframed}

\usepackage{bm} 
\usepackage{bbm}
\usepackage{xspace}
\usepackage{algorithm}
\usepackage{algorithmic}
\usepackage{comment}
\usepackage{tcolorbox}

\usepackage{amsthm,thmtools,mathtools}
\usepackage[shortlabels]{enumitem}
\makeatletter
\def\set@curr@file#1{\def\@curr@file{#1}} %
\makeatother
\usepackage[load-configurations=version-1]{siunitx} %

\newtheorem{thm}{Theorem}
\newtheorem{lem}[thm]{Lemma}
\newtheorem{rem}{Remark}

\newtheorem{cor}[thm]{Corollary}

\newtheorem{exmpl}{Example}
\newtheorem{defn}{Definition}
\newtheorem*{defn*}{Definition}

\usepackage{prettyref}
\newrefformat{fig}{Figure~\ref{#1}}
\newrefformat{alg}{Algorithm~\ref{#1}}
\newrefformat{def}{Definition~\ref{#1}}
\newrefformat{eqn}{Equation~\ref{#1}}
\newrefformat{cor}{Corollary~\ref{#1}}
\newrefformat{app}{Appendix~\ref{#1}}
\newrefformat{rem}{Remark~\ref{#1}}
\newrefformat{assu}{Assumption~\ref{#1}}
\newrefformat{q}{Question~\ref{#1}}
\newrefformat{clm}{Claim~\ref{#1}}
\newrefformat{thm}{Theorem~\ref{#1}}
\newrefformat{exmpl}{Example~\ref{#1}}

\makeatletter
\newcounter{modules}

\makeatother

\newcommand{\inparen}[1]{\left ( #1 \right )}
\newcommand{\insquare}[1]{\left [ #1 \right ]}

\newcommand{\abs}[1]{\left\lvert #1 \right\rvert}

\newlength{\dhatheight}

\DeclareMathOperator*{\argmin}{argmin}

\DeclareMathOperator*{\Ex}{\mathbb{E}}
\DeclareMathOperator*{\E}{\mathbb{E}}

\DeclareMathOperator*{\Prob}{Pr}

\newcommand{\eps}{\varepsilon}

\newcommand{\calA}{\mathcal{A}}

\newcommand{\calD}{\mathcal{D}}

\newcommand{\calF}{\mathcal{F}}
\newcommand{\calG}{\mathcal{G}}
\newcommand{\calH}{\mathcal{H}}
\newcommand{\calI}{\mathcal{I}}

\newcommand{\calL}{\mathcal{L}}

\newcommand{\calO}{\mathcal{O}}

\newcommand{\calU}{\mathcal{U}}

\newcommand{\calX}{\mathcal{X}}
\newcommand{\calY}{\mathcal{Y}}

\newcommand{\ERM}{\textsf{ERM}\xspace}

\newcommand{\ind}{\mathbbm{1}}

\newcommand{\vc}{{\rm vc}}

\newcommand{\MAJ}{{\rm MAJ}}

\newcommand{\OPT}{\mathsf{OPT}}
\newcommand{\OPTDMax}{\mathsf{OPT}^{\mathcal{D}}_{\max}}
\newcommand{\OPTSMax}{\mathsf{OPT}^S_{\max}}

\renewcommand{\vec}[1]{\mathbf{#1}}

\newcommand{\RLoss}{\ell^{\text{rob}}}

\newcommand{\removed}[1]{}

\usepackage{float}
\usepackage{comment}
\usepackage[suppress]{color-edits}
\addauthor{sa}{teal}
\addauthor{kms}{blue}
\addauthor{ab}{purple}
\addauthor{om}{red}
\begin{document}
\title{Agnostic Multi-Robust Learning Using ERM\footnote{Authors are ordered alphabetically.}}
\author{Saba Ahmadi$^\dagger$}
\author{Avrim Blum$^\dagger$}
\author{Omar Montasser$^\ddagger$} 
\author{Kevin Stangl$^\dagger$}

\affil{$^\dagger$Toyota Technological Institute at Chicago\\
$^\ddagger$University of California, Berkeley\\
{\small\texttt{\{saba,avrim,omar,kevin\}@ttic.edu}}}

\maketitle

\begin{abstract}
A fundamental problem in robust learning is asymmetry: a learner needs to correctly classify every one of exponentially-many perturbations that an adversary might make to a test-time natural example. In contrast, the attacker only needs to find one successful perturbation. ~\citet{xiang2022patchcleanser} proposed an algorithm that in the context of patch attacks for image classification, reduces the effective number of perturbations from an exponential to a polynomial number of perturbations and learns using an ERM oracle. However, to achieve its guarantee, their algorithm requires the natural examples to be robustly realizable. 
This prompts the natural question; can we extend their approach to the non-robustly-realizable case where there is no classifier with zero robust error?

Our first contribution is to answer this question affirmatively by reducing this problem to a setting in which an algorithm proposed by ~\citet{DBLP:conf/colt/FeigeMS15} can be applied, and in the process extend their guarantees. Next, we extend our results to a multi-group setting and introduce a novel agnostic multi-robust learning problem where the goal is to learn a predictor that achieves low robust loss on a (potentially) rich collection of subgroups.
\end{abstract}

\section{Introduction}

Robustness to adversarial examples is considered a major contemporary challenge in machine learning. Adversarial examples are carefully crafted perturbations or manipulations of natural examples that cause machine learning predictors to miss-classify at test-time \citep{goodfellow2014explaining}. One particularly challenging aspect of this problem is the asymmetry between the learner and the adversary. Specifically, a learner needs to produce a predictor that is \textit{correct} on a randomly drawn natural example and \textit{robust} to potentially \textit{exponentially} many possible perturbations of it; while, the adversary needs to find just a single perturbation that fools the learner. In fact, because of this, adversarially robust learning has proven to require more sophisticated learning algorithms that go beyond standard Empirical Risk Minimization (ERM) in non-robust learning \citep{pmlr-v99-montasser19a}.

In patch attacks on images, for instance, an adversary can select one of an exponential number of designs for a patch to be placed in the image in order to cause a classification error. To address this exponential asymmetry between the learner and the adversary, recently \citet{xiang2022patchcleanser} introduced a clever algorithmic scheme, known as Patch-Cleanser, that provably reduces the exponential number of ways that an adversary can attack to a polynomial number of ways through the idea of masking images. 

Specifically, Patch-Cleanser's \emph{double-masking} approach is based on zero-ing out  two different contiguous blocks of an input image, hopefully to remove the adversarial patch. For each one-masked image, if for all possible locations of the second mask, the prediction model outputs the same classification, it means that the first mask removed the adversarial patch, and the agreed-upon prediction is correct. Any disagreements in these predictions imply that the mask was not covered by the first patch. 
\kmsmargincomment{Alt vsn: Crucially the Patch Cleanser requires that for a given image the classifier has zero error for every double masked image. In order to,... }Crucially, the Patch-Cleanser algorithm requires a {\em two-mask correctness} guarantee from an underlying predictor $F$ that is defined as follows: for a given input image $x$ and label $y$, if for any pair of masks applied to $x$, predictor $F$ outputs the correct prediction $y$, then $F$ has a two-correctness guarantee on $(x,y)$ \citep[see Definition 2 in][]{xiang2022patchcleanser}. In order to train a predictor with the two-mask correctness guarantee, \citet{xiang2022patchcleanser} augment the training dataset with pairs of masks at random locations of training images, and perform \emph{empirical risk minimization} ($\ERM$) on the augmented dataset.

\paragraph{Our Contributions}
When no predictor is perfectly correct on {\em all} perturbations (e.g., all two-mask operations), which we refer to as the 
the \textit{non-realizable} or \emph{agnostic} setting, we exhibit an example where plain $\ERM$ on the augmented dataset fails (See \prettyref{exmpl:ERM-failure}%
). At  a high-level, the main issue is that plain $\ERM$ on the augmented data-set treats all mistakes equally and so this could lead to learning a predictor with very high robust loss, i.e. on {\em many} training examples.
Our first contribution is to investigate whether the reduction proposed by~\citet{xiang2022patchcleanser} can be extended to the \emph{non-realizable} setting.
We answer this question affirmatively in~\prettyref{sec:non-realizable-oracle}, by building upon a prior work by~\citet{DBLP:conf/colt/FeigeMS15}.%

Next, in \prettyref{sec:multi-robustness}, we consider a multi-group setting and investigate the question of \emph{agnostic multi-robust learning} using an $\ERM$ oracle. This question is inspired by the literature on \emph{multi-calibration} and \emph{multi-group learning}~\citep{multicalib, multi-accuracy-Kim-etal,multiagnostic,multiagstrat, bugbounty}.
Our objective is that given a hypothesis class $\calH$ and a (potentially) rich collection of subgroups $\calG$, learn a predictor $h$ such that for each group $g\in\calG$, $h$ has low robust loss on $g\in \calG$. However, we highlight that the prior work on multi-group learning does not extend to the setting of robust loss since they do not consider adversarial perturbations of natural examples. To our knowledge, our work is the first to consider the notion of multi-group learning for robust loss. That being said we emphasize that there is a trade-off here; our guarantees are for the \emph{more challenging objective of robust loss}, but they are weaker than the ones given for PAC learning in the prior work. A detailed comparison is given in~\prettyref{sec:related-work}. 

Our motivation for studying multi-robustness is two-fold. First, to prohibit the adversary from targeting a specific demographic group
for adverse treatment. Additionally, it can increase the overall performance of the model by forcing the model to be robust on vulnerable examples. For instance, imagine a self-driving car system with a vision system recording a drive and we consider adversarial examples attacking individual frames of the video.
Ideally, the system would have robust performance over every frame. 
However, average robust error of $1\%$ could be very problematic if those errors instead of occurring uniformly then those errors concentrated on a specific adjacent set of frames.
In this %
example, imagine that the protected groups are nearby frames so that we maintain smooth and reliable performance \emph{locally and globally}.

To achieve multi-robustness, using plain $\ERM$ can fail by \emph{concentrating} the overall robust loss on a \emph{few} groups, instead of \emph{spreading} the loss across \emph{many} groups. However, building on our algorithm in \prettyref{sec:non-realizable-oracle} we propose~\prettyref{alg:boosting} that runs an additional layer of boosting with respect to groups to achieve multi-robustness guarantees across groups. We propose two types of multi-robustness guarantees, the first one is a randomized approach that guarantees the expected robust loss on each group is low (\prettyref{thm:generalization-multi-groups}). Next, we add a de-randomization step to derive deterministic guarantees for the robust loss incurred on each group (\prettyref{thm:generalization-multi-groups-deterministic}).

\subsection{Related Work}
\label{sec:related-work}
\paragraph{Patch Attacks} Patch attacks \citep{brown2017adversarial,karmon2018lavan,yang2020patchattack} 
are an important threat model in the general field of test-time evasion attacks \citep{goodfellow2014explaining}. 
Patch attacks realize adversarial test time evasion attacks to computer vision systems in the wild
by printing and attaching a patch to an object.
To mitigate this threat, 
there has been an active line of research for providing certifiable robustness guarantees against them \citep[see e.g.,][]{minorityreport, patchguard, patchguard++, bagcert, clippedbag,chiang2020}.

\paragraph{Adversarial Learning using $\ERM$}Recent work by \citep{DBLP:conf/colt/FeigeMS15} gives a reduction algorithm for adversarial learning using an ERM oracle, but their guarantee is only for finite hypothesis classes. We observe in this work that we can apply their reduction algorithm to our problem, and along the way, we extend the guarantees of their algorithm. A more detailed comparison is provided in \prettyref{sec:feigecomparison}.

\paragraph{Multi-group Learning}
Interestingly, the notion of multi-robustness has connections with a thriving area of work in algorithmic fairness centered on the notion of multi-calibration~\cite{multicalib, multi-accuracy-Kim-etal,multiagnostic,multiagstrat, bugbounty, gopalan2022loss}. %
The promise of these multi-guarantees, given a rich set of groups, is to ensure uniformly acceptable performance
on many groups simultaneously. 

Specifically, \citet{multiagnostic} show how to learn a predictor such that the loss experienced by every group is not much larger than the best
possible loss for this group within a given hypothesis class. However, we highlight that the prior work on multi-group learning does not extend to the setting of robust loss since their goal is not to minimize the robust loss by taking into consideration different perturbations of natural examples. In contrast, our approach can achieve multi-robustness guarantees by utilizing two layers of boosting to ensure `emphasis' on both specific groups and the adversarial perturbations.

\citet{multiagstrat,bugbounty} study the problem of minimizing a general loss function over a collection of subgroups. Their approach can capture the robust loss, however, the main distinction between their algorithm and our approach is that unlike them, we do not use group membership during the test time. This is essential when groups correspond to protected features, and therefore in some scenarios, it would be undesirable to incorporate them in decision models. Additionally, if we interpret some of the groups in our setting as objects to be classified like a stop-sign group or fire-hydrant group, %
then an approach that needs to detect group membership is too strong an assumption since the correct classification of those objects is our original goal.

However, we highlight that there is a trade-off here; To our knowledge, our work is the first one to achieve guarantees for the \emph{more challenging objective of robust learning without having access to the group membership of examples} but at the cost of achieving a weaker upper bound on the robust loss incurred on each group compared to the previous work on multi-group PAC learning. A detailed comparison is given in~\prettyref{sec:comparison-prior-work-multi-group-learning}.

\section{Setup and Notation} 
Let $\calX$ denote the instance space and $\calY$ denote the label space. Our main objective is to be robust against adversarial patches $\calA:\calX\to 2^\calX$, where $\calA(x)$ represents the (potentially infinite) set of adversarially patched images that an adversary might attack with at test-time on input $x$. \citet{xiang2022patchcleanser} showed that even though the space of adversarial patches $\calA(x)$ can be exponential or infinite, one can consider a ``covering'' %
function $\calU:\calX\to 2^\calX$ of masking operations on images where $\abs{\calU(x)}$ shows the covering set on input image $x$ and is polynomial in the image size. Thus, for the remainder of the paper, we focus on the task of learning a predictor robust to a perturbation set $\calU:\calX \to 2^{\calX}$, where %
$\calU(x)$is the set of allowed masking operations that can be performed on $x$. We assume that $\calU(x)$ is finite where $\abs{\calU(x)} \leq k$. 

We observe $m$ iid samples $S\sim \calD^m$ from an unknown distribution $\calD$, and our goal is to learn a predictor $\hat{h}$ achieving small robust risk:
\begin{equation}
\label{eqn:robrisk}
\Ex_{(x,y)\sim \calD} \insquare{ \max_{z\in \calU(x)} \ind[\hat{h}(z)\neq y]}.
\end{equation}

Let $\calH\subseteq \calY^\calX$ be a hypothesis class, and denote by $\vc(\calH)$ its VC dimension. Let $\ERM_\calH$ be an $\ERM$ oracle for $\calH$ that returns a hypothesis $h\in \calH$ that minimizes empirical loss. For any set arbitrary set $W$, denote by $\Delta(W)$ the set of distributions over $W$. 

In~\prettyref{sec:non-realizable-oracle}, we focus on a single-group setting where the benchmark $\OPT_{\calH}$ is defined as follows:
\begin{equation}
\label{eqn:opt}
    \OPT_{\calH} \triangleq \min_{h\in \calH} \Ex_{(x,y)\sim \calD} \max_{z \in \calU(x)} \ind\insquare{h(z)\neq y}. 
\end{equation}

In~\prettyref{sec:multi-robustness}, we consider a multi-group setting, where the instance space $\mathcal{X}$ is partitioned into a set of $g$ groups $\mathcal{G}=\{G_1,\dots,G_g\}$. These groups solely depend on the features $x$ and not the labels. The goal is to learn a predictor that has low robust loss on all the groups simultaneously. In this setup, the benchmark $\OPTDMax$ is as follows:
\begin{align}
\label{optmax}
   \OPTDMax =  \min_{h\in\calH} \max_{j\in[g]} \Ex_{(x,y)\sim D}\insquare{\max_{z\in \calU(x)} \ind[h(z)\neq y] \big| x\in G_j}
    \small
\end{align}

\section{Minimizing Robust Loss Using an ERM Oracle \label{erm}}
\label{sec:non-realizable-oracle}

First, we show an example where the approach of \citet{xiang2022patchcleanser} of calling $\ERM_\calH$ on the inflated dataset, i.e., original training points plus all possible perturbations resulting from the allowed masking operations, fails by obtaining %
a multiplicative gap of $k-1$ %
in the robust loss between the optimal robust classifier and the classifer returned by $\ERM_\calH$, where $k$ is the size of the perturbation sets. This gap exists since $\ERM$ can exhibit a solution that incorrectly classifies at least one perturbation per natural example, while there is a robust classifier that concentrates error on one natural example, thus getting low robust loss.

\begin{exmpl}
\label{exmpl:ERM-failure}
Consider the following example in $\mathcal{R}$. There is a training set $\{z_1,\cdots,z_{2n}\}$ of original examples, where examples $\{z_1,\cdots,z_n\}$ are positively labeled and are located at $x=1$. $\{z_{n+1},\cdots,z_{2n}\}$ are negatively labeled and are at $x=-1$. Each example $z_i$ has $k=n$ perturbations denoted by $\{z_{i,1},\cdots,z_{i,k}\}$. 

For each of the negative examples $\{z_{n+1},\cdots,z_{2n-1}\}$, all their perturbations are at $x=-0.75$. For the negative example $z_{2n}$, all its perturbations, i.e. $\{z_{2n,1},\cdots,z_{2n,k}\}$, are at $x=0$. For each positive example $z_i$ where $i\in\{1,\cdots,n-1\}$, one of their perturbations $z_{i,1}$ is at $x=0$ and the rest, i.e. $\{z_{i,2},\cdots,z_{i,k}\}$, are at $x=0.75$. For the positive example $z_n$, all its perturbations $z_{n,1},\cdots,z_{n,k}$ are at $x=0.75$.

 The adversarial training procedure considered in the paper by~\citet{xiang2022patchcleanser} runs ERM on the augmented dataset (original examples and all their perturbations) to minimize the 0/1 loss. ERM finds a threshold classifier $h_{ERM}$ with threshold $\tau=\eps_1$ for any $0<\eps_1<0.75$ that classifies any points with $x<\tau$ as negative and points with $x\geq \tau$ as positive. As a result, for each positive example $z_i$ for $i\in\{1,\cdots,n-1\}$, the perturbation $z_{i,1}$ is getting classified mistakenly which causes a robust loss on $z_i$. Therefore, $h_{ERM}$ has a robust loss of $(n-1)/2n$  since $n-1$ of the positive examples are not robustly classified. 
However, there exists a threshold classifier $h^*$ with threshold $\tau=\eps_2$ for any $-0.75<\eps_2<0$ that only makes mistakes on perturbations of $z_{2n}$ and thus has a robust loss of $1/2n$. However, its 0/1 loss is higher than $h_{ERM}$ and therefore ERM does not pick it.
Therefore, ERM can be suboptimal up to a multiplicative factor of $n-1$ for any arbitrary value of $n$. An illustration is given in~\prettyref{fig:example}.

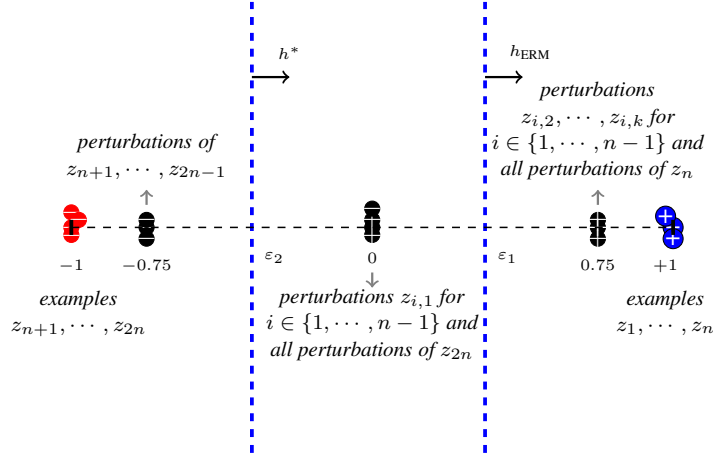
\begin{figure}[ht]
\label{fig:example}
\centering
\begin{tikzpicture}%

 \draw[red, fill=red,text=white] (-4,0) circle (0.1cm);
 \node[white] at (-4,0) {\small{$-$}};

 \draw[red, fill=red,text=white] (-4,0.2) circle (0.1cm);
 \node[white] at (-4,0.2) {\small{$-$}};

 \draw[red, fill=red,text=white] (-3.9,0.1) circle (0.1cm);
 \node[white] at (-3.9,0.1) {\small{$-$}};

 \draw[red, fill=red,text=white] (-4,-0.1) circle (0.1cm);
 \node[white] at (-4,-0.1) {\small{$-$}};
 
 \draw[line width=0.5mm] (-4,-0.1) -- (-4,0.1);
 \node[below] at (-4,-0.3) {\tiny{$-1$}};
 \node[below,align=center,font=\fontsize{8}{10}\itshape\selectfont] at (-3.9,-0.7) {examples\\ $z_{n+1},\cdots,z_{2n}$};
 
\draw[black, fill=black,text=white] (-3,0) circle (0.1cm);
\node[white] at (-3,0) {\small{$-$}};

 \draw[black, fill=black,text=white] (-3,0.1) circle (0.1cm);
\node[white] at (-3,0.1) {\small{$-$}};

 \draw[black, fill=black,text=white] (-3,-0.15) circle (0.1cm);
\node[white] at (-3,-0.15) {\small{$-$}};
\node[below] at (-3,-0.3) {\tiny{$-0.75$}};
\node[above,align=center,font=\fontsize{8}{10}\itshape\selectfont] at (-3,0.5) {perturbations of\\ $z_{n+1},\cdots,z_{2n-1}$};
\draw[->,line width=0.3mm,gray] (-3,0.3)--(-3,0.5);
\draw[black, fill=black,text=white] (+3,0) circle (0.1cm);
\node[white] at (+3,0) {\small{$+$}};

 \draw[black, fill=black,text=white] (+3,0.1) circle (0.1cm);
\node[white] at (+3,0.1) {\small{$+$}};

 \draw[black, fill=black,text=white] (+3,-0.15) circle (0.1cm);
\node[white] at (+3,-0.15) {\small{$+$}};
\node[below] at (+3,-0.3) {\tiny{$0.75$}};
\node[above,align=center,font=\fontsize{8}{10}\itshape\selectfont] at (+3,0.5) {perturbations \\ $z_{i,2},\cdots,z_{i,k}$ for\\ $i\in\{1,\cdots,n-1\}$ and\\ all perturbations of $z_n$};
\draw[->,line width=0.3mm,gray] (3,0.3)--(3,0.5);

\draw[black, fill=black,text=white] (0,0) circle (0.1cm);
\node[white] at (+3,0) {\small{$+$}};

 \draw[black, fill=black,text=white] (0,0.1) circle (0.1cm);
\node[white] at (0,0.1) {\small{$+$}};

 \draw[black, fill=black,text=white] (0,-0.1) circle (0.1cm);
\node[white] at (0,-0.1) {\small{$+$}};

 \draw[black, fill=black,text=white] (0,0.25) circle (0.1cm);
\node[white] at (0,0.25) {\small{$-$}};

\node[below] at (0,-0.2) {\tiny{$0$}};
\node[above,align=center,font=\fontsize{8}{10}\itshape\selectfont] at (0,-1.9) {perturbations $z_{i,1}$ for\\ $i\in\{1,\cdots,n-1\}$ and\\ all perturbations of $z_{2n}$};

\draw[->,line width=0.3mm,gray] (0,-0.6)--(0,-0.8);
\node[circle,draw, minimum size = 0.1cm, inner sep=0pt,fill=blue] (C) at  (+4,0) [text=white] {\small{+}};

\node[circle,draw, minimum size = 0.1cm, inner sep=0pt,fill=blue] (C) at  (+3.9,0.15) [text=white] {\small{+}};

\node[circle,draw, minimum size = 0.1cm, inner sep=0pt,fill=blue] (C) at  (+4,-0.15) [text=white] {\small{+}};
\node[below] at (3.9,-0.3) {\tiny{$+1$}};
\node[below,align=center,font=\fontsize{8}{10}\itshape\selectfont] at (3.9,-0.7) {examples\\ $z_{1},\cdots,z_{n}$};

\draw[line width=0.5mm] (4,-0.1) -- (4,0.1);

\draw[line width=0.5mm,blue,dashed] (1.5,-3) -- (1.5,3);
\node[below] at (1.8,-0.2) {\tiny{$\eps_1$}};

\draw[->,line width=0.3mm] (1.5,2) -- (2,2);
\node[above] at (2.1,2.1) {\tiny{$h_{\text{ERM}}$}};

\draw[line width=0.5mm,blue,dashed] (-1.6,-3) -- (-1.6,3);
\node[below] at (-1.3,-0.2) {\tiny{$\eps_2$}};

\draw[->,line width=0.3mm] (-1.6,2) -- (-1.1,2);
\node[above] at (-1.1,2.1) {\tiny{$h^*$}};

\draw[dashed, line width=0.2mm] (-4,0) -- (4,0);

\end{tikzpicture}
\caption{
$\ERM$ failure mode in the robustly un-realizable case. Blue, red, and black points show respectively original examples with a positive label, original examples with a negative label, and perturbations of original examples.
}
\end{figure}
\end{exmpl}

Next, we present our first contribution: %
we show in~\prettyref{thm:generalization-FMS} that~\prettyref{alg:FMS} proposed by 
~\citet{DBLP:conf/colt/FeigeMS15} learns a predictor that is simultaneously robust to a set of (polynomially many) masking operations, using an $\ERM_\calH$ oracle. The algorithm is based on prior work%
, but the analysis and application are novel in this work. A detailed comparison with~\citet{DBLP:conf/colt/FeigeMS15} is given in~\prettyref{sec:feigecomparison}. The main interesting feature of this algorithm is that it achieves stronger robustness guarantees in the non-realizable regime when $\OPT_\calH \gg 0$, where the approach of \citet{xiang2022patchcleanser} can fail as mentioned in~\prettyref{exmpl:ERM-failure}. 

\begin{algorithm}%
\begin{algorithmic}[1]
\caption{\citet*{DBLP:conf/colt/FeigeMS15}}
\label{alg:FMS}
  \INPUT weight update parameter $\eta>0$, number of rounds $T$, and training dataset $S=\{(x_1,y_1),\dots, (x_m,y_m)\}$ and corresponding weights $p_1,\cdots,p_m$\;
  
  \STATE 
  Set $w_1(z, (x,y)) = 1$, for each $(x,y)\in S, z\in \calU(x)$.\;
  
  \STATE 
  Set $P^1(z,(x,y)) = \frac{w_1(z,(x,y))}{\sum_{z'\in\calU(x)} w_1(z',(x,y))}$, for each $(x,y)\in S, z\in \calU(x)$.\;
  
\FOR{each $t\in \{1,\cdots T\}$}%
\STATE 
Call $\ERM$ on the empirical weighted distribution:\;
\STATE 
{\small\[h_t = \argmin_{h\in\mathcal{H}} \sum_{(x,y)\in S} \sum_{z\in \calU(x)} {p_{(x,y)} }P^t(z,(x,y)) \ind\insquare{h_{t}(z)\neq y}\]}\;
\FOR{each $(x,y)\in S$ and $z\in\calU(x)$}%
\STATE {\small $w_{t+1}(z,(x,y)) = (1+\eta \ind\insquare{h_{t}(z)\neq y}) \cdot w_{t}(z, (x,y))$}\;

\STATE 
$P^{t+1}(z,(x,y))=\frac{w_t(z,(x,y))}{\sum_{z'\in\calU(x)} w_t(z',(x,y))}$\;
\ENDFOR
\ENDFOR
\OUTPUT The majority-vote predictor $\MAJ(h_1,\dots, h_T)$. \\
\end{algorithmic}
\end{algorithm}

\begin{thm}
\label{thm:generalization-FMS}
Set $T(\eps) = \frac{32 \ln k}{\eps^2}$ and $m(\eps, \delta) = O\inparen{\frac{\vc(\calH)(\ln k)^2}{\eps^4}\ln \inparen{\frac{\ln k}{\eps^2}}+\frac{\ln(1/\delta)}{\eps^2}}$. Then, for any distribution $\calD$ over $\calX\times \calY$, with probability at least $1-\delta$ over $S\sim \calD^{m(\eps,\delta)}$, running \prettyref{alg:FMS} {where $p_{(x,y)}=1/m$ for all $(x,y)\in S$ }for $T(\eps)$ rounds produces $h_1,\dots,h_{T(\eps)}$ satisfying:

\small\[\Ex_{(x,y)\sim \calD} \insquare{ \max_{z\in \calU(x)} \ind\insquare{\MAJ(h_1,\dots, h_{T(\eps)})(z)\neq y} } 
\leq 2\OPT_{\calH} + \eps\]
where $\MAJ(h_1,\dots, h_{T(\eps)})$ shows the majority-vote of predictors $h_1,\dots, h_{T(\eps)}$.
\end{thm}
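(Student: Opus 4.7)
The plan is to combine a per-example multiplicative-weights regret inequality with the guarantee of the ERM oracle to produce an empirical bound of the form $\widehat{R}_{\mathrm{rob}}(\MAJ(h_1,\ldots,h_T))\leq 2\widehat{\OPT}_\calH + O(\eps)$, and then lift this to the population bound $2\OPT_\calH+\eps$ via uniform convergence. I would set $\eta=\Theta(\eps)$ and leave $T=T(\eps)=32\ln k/\eps^2$ as given.

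Fix a training example $(x,y)\in S$. The update rule $w_{t+1}(z,(x,y))=(1+\eta\,\ind[h_t(z)\neq y])\,w_t(z,(x,y))$ is exactly a Freund--Schapire multiplicative-weights step over the $|\calU(x)|\leq k$ ``experts'' $z\in\calU(x)$ with 0/1 gains. Upper-bounding the total weight $W_{T+1}(x,y)=\sum_z w_{T+1}(z,(x,y))$ by $k\exp\!\bigl(\eta\sum_t\sum_z P^t(z,(x,y))\ind[h_t(z)\neq y]\bigr)$, lower-bounding it by $(1+\eta)^{M^\star}$ for the worst perturbation $z^\star$ (with cumulative error $M^\star$), taking logs, and using $\ln(1+\eta)\geq \eta-\eta^2/2$, yields the standard per-example inequality
\[
\max_{z\in\calU(x)}\frac{1}{T}\sum_{t=1}^{T}\ind[h_t(z)\neq y]\;\leq\;(1+\eta)\cdot\frac{1}{T}\sum_{t=1}^{T}\sum_{z\in\calU(x)}P^t(z,(x,y))\,\ind[h_t(z)\neq y]\;+\;\frac{(1+\eta)\ln k}{\eta T}.
\]
Averaging this over $(x,y)\in S$ with $p_{(x,y)}=1/m$ and swapping sums, the right-hand side becomes $(1+\eta)$ times the average of the ERM objectives in line~5 of \prettyref{alg:FMS} plus a regret term. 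Since any empirical robust minimizer $h^\star\in\calH$ satisfies $\sum_z P^t(z,(x,y))\ind[h^\star(z)\neq y]\leq \max_z \ind[h^\star(z)\neq y]$ pointwise in $(x,y)$, the ERM step guarantees that each $h_t$ attains weighted loss at most $\widehat{\OPT}_\calH$. Combining with the elementary bound $\ind[\MAJ(h_1,\ldots,h_T)(z)\neq y]\leq\frac{2}{T}\sum_t\ind[h_t(z)\neq y]$ (since a majority-vote error forces at least $T/2$ of the $h_t$ to err on $z$) gives the empirical inequality
\[
\frac{1}{m}\sum_{(x,y)\in S}\max_{z\in\calU(x)}\ind[\MAJ(h_1,\ldots,h_T)(z)\neq y]\;\leq\;2(1+\eta)\,\widehat{\OPT}_\calH\;+\;\frac{2(1+\eta)\ln k}{\eta T},
\]
and with $\eta=\Theta(\eps)$ and $T=32\ln k/\eps^2$, both additive terms are $O(\eps)$, giving $2\widehat{\OPT}_\calH+O(\eps)$.

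It remains to invoke uniform convergence, which I would apply in two places: (i) to conclude $\widehat{\OPT}_\calH\leq\OPT_\calH + O(\eps)$, using that the robust-loss class $\{(x,y)\mapsto\max_{z\in\calU(x)}\ind[h(z)\neq y]:h\in\calH\}$ has VC-type complexity $O(\vc(\calH)\log k)$ via a dual Sauer argument combined with $|\calU(x)|\leq k$; and (ii) to close the gap between the empirical and population robust loss of $\MAJ(h_1,\ldots,h_T)$, using that the class of $T$-fold majorities over $\calH$ has VC dimension $O(T\,\vc(\calH)\log T)$. Feeding $T=\Theta(\ln k/\eps^2)$ into the standard VC sample complexity $O((\mathrm{VCdim}+\log(1/\delta))/\eps^2)$ produces precisely the stated $m(\eps,\delta)=O\!\bigl(\vc(\calH)(\ln k)^2\log(\ln k/\eps^2)/\eps^4+\log(1/\delta)/\eps^2\bigr)$, and chaining these two inequalities with the empirical bound above produces $2\OPT_\calH+\eps$ after rescaling the constant hidden in $\Theta(\eps)$.

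The main obstacle I anticipate is orchestrating step~(ii) cleanly: the function to be concentrated is $(x,y)\mapsto\max_{z\in\calU(x)}\ind[f(z)\neq y]$ for $f$ ranging over all $T$-fold majorities of elements of $\calH$, i.e., a composition of the $\max_z$ operator with a voting rule, and the VC-type bound for this composite class must be shown to pay only an additional $\log k$ factor on top of the $O(T\,\vc(\calH)\log T)$ majority-VC bound (e.g., via a union bound over $|\calU(x)|\leq k$), so that the final sample complexity is $\widetilde{O}(\vc(\calH)(\ln k)^2/\eps^4)$ rather than polynomial in $k$. This is precisely the step at which the present argument extends \citet{DBLP:conf/colt/FeigeMS15}, who restrict to finite hypothesis classes and therefore can replace the VC machinery by a simple union bound over $|\calH|$; the extension to general VC classes requires a Sauer--Shelah style argument in combination with the $\max$ over perturbations.
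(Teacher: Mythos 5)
Your proposal is correct and follows essentially the same route as the paper's proof: a per-example multiplicative-weights regret bound over the $k$ perturbations (this is the content of \prettyref{lem:FMS}), a comparison of each round's weighted ERM loss to the empirical robust optimum (the paper packages this as the minimax inequality in \prettyref{lem:opt}, while you observe the pointwise domination directly, which is the first link of that same chain), the factor-$2$ majority-vote conversion, and uniform convergence for the robust loss of the class of $T$-fold majorities via $\vc(\calH^T)=O(\vc(\calH)T\ln T)$ composed with the $O(\vc(\calH)\log k)$ robust-loss bound. The only substantive simplification available relative to your step (i) is that the paper bounds $\OPT_{S,\calH}\leq\OPT_\calH+\eps/8$ by a Chernoff--Hoeffding bound on the single population-optimal hypothesis, which suffices since only that one direction is needed.
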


\begin{rem}
In the approach proposed by~\citet{xiang2022patchcleanser}, the robust loss with respect to the (exponentially many) patches is upper bounded by the robust loss with respect to the (polynomially many) masking operations. Therefore,~\prettyref{thm:generalization-FMS} implies that the robust loss against patches is at most $2\OPT_{\calH} + \eps$.
\end{rem}

\subsection{Comparison with prior related work} 
\label{sec:feigecomparison}
As presented, \cite{DBLP:conf/colt/FeigeMS15} only considered \emph{finite} hypothesis classes $\calH$ and provided generalization guarantees depending on $\log\abs{\calH}$. On the other hand, we consider here infinite classes $\calH$ with bounded VC dimension and provide tighter robust generalization bounds (see \prettyref{thm:generalization-FMS}). We would also like to highlight another difference. Given an output of $h_1,\dots,h_T$ from \prettyref{alg:FMS}, the guarantee provided by \cite{DBLP:conf/colt/FeigeMS15} is on average and does not exactly capture the notion of robust loss %
i.e. the loss on input $x$ is $\sup_{z\in\calU(x)} \frac{1}{T}\sum_{t=1}^{T} \ind[h_t(z)\neq y]$ (\prettyref{lem:FMS} states their result). We emphasize that this is different from the \emph{robust loss} guarantee that we obtain in~\prettyref{thm:generalization-FMS} for a %
single classifier, i.e. the loss on input 
$x$ is captured as $\sup_{z\in\calU(x)} \ind[\MAJ(h_1,\dots,h_T)(z)\neq y]$. %
In particular, unlike the %
guarantee provided by \cite{DBLP:conf/colt/FeigeMS15} in which the adversary chooses $z\in\calU(x)$ and then we can probabilistically choose a classifier to classify it, to implement the Patch-Cleanser reduction we need a single classifier that is \emph{simultaneously} correct on \emph{all} 
$z\in\calU(x)$. Because of the difference in guarantees derived, we incur a multiplicative factor of 2 compared with their bound. 

The robust learning guarantee \citep[][Theorem 2]{attias2022improved} assumes access to a \emph{robust} $\ERM$ oracle, which minimizes the robust loss on the training dataset. On the other hand, at the expense of higher sample complexity, we provide a robust learning guarantee using only an $\ERM$ oracle which is a more common and simpler assumption in the challenging \emph{non-realizable} setting. Prior work due to \citet{DBLP:conf/nips/MontasserHS20} considered using an $\ERM$ oracle for robust learning but only in the simpler realizable setting (when $\OPT_\calH=0$).

\subsection{Proof of \prettyref{thm:generalization-FMS}}
We exhibit a sketch for the proof of~\prettyref{thm:generalization-FMS} and defer the details to the Appendix. The main insight is to solve a finite zero-sum game. In particular, our goal is to find a mixed-strategy over the hypothesis class that is approximately close to the value of the game:
\[\OPT_{S,\calH} \triangleq \min_{h\in \calH} \frac{1}{m}\sum_{i=1}^{m} \max_{z_i\in \calU(x_i)} \ind\insquare{h(z_i)\neq y_i}.\]

We observe that \prettyref{alg:FMS} due to \citep{DBLP:conf/colt/FeigeMS15} solves a similar finite zero-sum game (see \prettyref{lem:FMS}), and then we relate it to the value of the game we are interested in (see \prettyref{lem:opt}). Combined together, this only establishes that we can minimize the robust loss on the empirical dataset using an $\ERM$ oracle. We then appeal to uniform convergence guarantees for the robust loss in \prettyref{lem:unif-robloss} to show that, with a large enough training data, our output predictor achieves robust risk that is close to the value of the game. 

\begin{lem}
\label{lem:opt}
For any dataset $S %
=\{(x_1,y_1),\dots,(x_m,y_m)\}\in (\calX\times\calY)^m$ {with corresponding weights $p_1,\cdots,p_m=1/m$},
\begin{align*}
    &\OPT_{S,\calH} = \min_{h\in \calH} \frac{1}{m}\sum_{i=1}^{m} \max_{z_i\in \calU(x_i)} \ind\insquare{h(z_i)\neq y_i} \\
    &\geq
    \min_{Q\in \Delta(\calH)} \max_{\substack{P_{1}\in \Delta(\calU(x_1)),\\ \dots\\ P_{m} \in \Delta(\calU(x_m))}} \frac{1}{m} \sum_{i=1}^{m} \Ex_{z_i\sim P_i } \Ex_{h\sim Q} \ind\insquare{h(z_i)\neq y_i}
\end{align*}
\end{lem}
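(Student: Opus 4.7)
The plan is to establish the inequality by a two-step minimax-style argument, using only weak duality (no need for von Neumann's theorem here since we only want $\geq$).

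\textbf{Step 1: Replace the inner max over points with a max over distributions.} For any fixed $h\in\calH$, the indicator-valued function $z_i \mapsto \ind[h(z_i)\neq y_i]$ is (trivially) a linear function on $\Delta(\calU(x_i))$, so its maximum over the probability simplex is attained at a vertex. Hence
\[
\max_{z_i\in \calU(x_i)} \ind[h(z_i)\neq y_i] \;=\; \max_{P_i\in \Delta(\calU(x_i))} \Ex_{z_i\sim P_i} \ind[h(z_i)\neq y_i].
\]
Because the $P_i$ are chosen independently across $i$, summing and taking the product of simplices yields, for every $h\in\calH$,
\[
\frac{1}{m}\sum_{i=1}^{m} \max_{z_i\in\calU(x_i)} \ind[h(z_i)\neq y_i] \;=\; \max_{\substack{P_i\in\Delta(\calU(x_i))\\ i=1,\dots,m}} \frac{1}{m}\sum_{i=1}^{m} \Ex_{z_i\sim P_i} \ind[h(z_i)\neq y_i].
\]

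\textbf{Step 2: Extend the outer min from pure to mixed strategies.} Identify each $h\in\calH$ with the Dirac distribution $\delta_h\in\Delta(\calH)$; under this embedding $\calH \subseteq \Delta(\calH)$, and the integrand $\Ex_{h\sim \delta_h}\ind[h(z_i)\neq y_i] = \ind[h(z_i)\neq y_i]$ is unchanged. Since minimizing over a larger set can only decrease the value,
\[
\min_{h\in\calH} \max_{P_1,\dots,P_m} \frac{1}{m}\sum_{i=1}^{m}\Ex_{z_i\sim P_i}\ind[h(z_i)\neq y_i] \;\geq\; \min_{Q\in\Delta(\calH)} \max_{P_1,\dots,P_m} \frac{1}{m}\sum_{i=1}^{m}\Ex_{z_i\sim P_i}\Ex_{h\sim Q}\ind[h(z_i)\neq y_i].
\]
Chaining with Step 1 gives exactly the desired bound on $\OPT_{S,\calH}$.

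\textbf{Remarks on difficulty.} There is essentially no obstacle: the argument is purely a weak-duality/convexification observation. The only subtlety is to keep the order of quantifiers straight, in particular that the adversary's maximization over $(P_1,\dots,P_m)$ decouples across examples, which is what lets Step 1 go through coordinate-by-coordinate. Note that the statement does not claim equality, so we do not need to invoke a minimax theorem; had equality been claimed, one would verify compactness/convexity of $\Delta(\calH)$ and $\prod_i \Delta(\calU(x_i))$ together with bilinearity of the payoff and apply Sion's (or von Neumann's) minimax theorem, but for the stated inequality the embedding of pure into mixed strategies suffices.
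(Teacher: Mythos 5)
Your proof is correct and follows essentially the same weak-duality argument as the paper: the paper likewise relaxes the pure minimizer $h$ to a mixture $Q\in\Delta(\calH)$ and the pure perturbations $z_i$ to distributions $P_i$, merely ordering these relaxations slightly differently (it first pulls the max over $(z_1,\dots,z_m)$ outside the sum, then passes to mixed strategies on each side). No gap; your observation that Step 1 is in fact an equality is a harmless strengthening.
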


\begin{lem} [\citet*{DBLP:conf/colt/FeigeMS15}]
\label{lem:FMS}
For any data set $S %
=\{(x_1,y_1),\dots,(x_m,y_m)\}\in (\calX\times\calY)^m$ {with corresponding weights $p_1,\cdots,p_m=1/m$}, running \prettyref{alg:FMS} for $T$ rounds produces a mixed-strategy $\hat{Q} = \frac{1}{T} \sum_{t=1}^{T} h_t \in \Delta(\calH)$ satisfying:
{\small
\begin{align*}
    &\max_{\substack{P_1\in \Delta(\calU(x_1)),\\ \dots,\\ P_m\in \Delta(\calU(x_m))}} \frac{1}{m}\sum_{i=1}^{m} \Ex_{z_i\sim P_i} \frac{1}{T} \sum_{t=1}^{T} \ind\insquare{h_t(z_i)\neq y_i} \\
    &\leq
    \min_{Q\in \Delta(\calH)} \max_{\substack{P_{1}\in \Delta(\calU(x_1)),\\ \dots,\\ P_{m} \in \Delta(\calU(x_m))}} \frac{1}{m} \sum_{i=1}^{m} \Ex_{z_i\sim P_i } \Ex_{h\sim Q} \ind\insquare{h(z_i)\neq y_i} + 
    2\sqrt{\frac{\ln k}{T}}
\end{align*}
}%
\end{lem}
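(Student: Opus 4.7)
The proof is a direct application of the multiplicative weights / Hedge regret analysis, run in parallel on each of the $m$ training examples, combined with the fact that $h_t$ is an $\ERM$ best-response to the current weight distribution. The key observation is that for each fixed $i$, the weights $w_t(\cdot,(x_i,y_i))$ over $\calU(x_i)$ evolve exactly as a standard MW procedure in which the ``expert'' set is $\calU(x_i)$ (of size at most $k$), the loss on expert $z$ in round $t$ is the indicator $\ind[h_t(z)\neq y_i]\in [0,1]$, and the algorithm is trying to \emph{maximize} this loss against the learner's responses.

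\textbf{Step 1 (per-example regret).} Using the standard multiplicative weights potential argument (bounding $\ln(W_{T+1}/W_1)$ from above by $\sum_t \eta\, \Ex_{z\sim P_i^t}\ind[h_t(z)\neq y_i]$ via $\ln(1+x)\leq x$, and from below, for any fixed $z^\star\in\calU(x_i)$, by $\sum_t \ln(1+\eta\,\ind[h_t(z^\star)\neq y_i])\geq \eta\sum_t \ind[h_t(z^\star)\neq y_i] - T\eta^2/2$), I obtain, for every $i\in[m]$ and every $P_i\in\Delta(\calU(x_i))$,
\[
\Ex_{z_i\sim P_i}\frac{1}{T}\sum_{t=1}^T \ind[h_t(z_i)\neq y_i] \;\leq\; \frac{1}{T}\sum_{t=1}^T \Ex_{z_i\sim P_i^t}\ind[h_t(z_i)\neq y_i] \;+\; \frac{\eta}{2} + \frac{\ln k}{\eta T}.
\]
Tuning $\eta=\sqrt{\tfrac{2\ln k}{T}}$ (or the value the paper uses) gives a regret of $2\sqrt{\ln k/T}$ per example.

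\textbf{Step 2 (average and move to the minimax value).} Averaging Step 1 over $i$ (with weights $p_i=1/m$) and taking the $\max$ over $(P_1,\dots,P_m)$ on the left—which factors as $\max_{P_i}$ per coordinate since the constraints are independent—yields
\[
\max_{P_1,\dots,P_m}\frac{1}{m}\sum_{i=1}^m \Ex_{z_i\sim P_i}\frac{1}{T}\sum_{t=1}^T \ind[h_t(z_i)\neq y_i] \;\leq\; \frac{1}{T}\sum_{t=1}^T \frac{1}{m}\sum_{i=1}^m \Ex_{z_i\sim P_i^t}\ind[h_t(z_i)\neq y_i] \;+\; 2\sqrt{\tfrac{\ln k}{T}}.
\]
Now I use that $h_t$ is the $\ERM$ minimizer of exactly the inner quantity on the right, so for each $t$,
\[
\frac{1}{m}\sum_{i=1}^m \Ex_{z_i\sim P_i^t}\ind[h_t(z_i)\neq y_i] \;=\; \min_{h\in\calH}\frac{1}{m}\sum_{i=1}^m \Ex_{z_i\sim P_i^t}\ind[h(z_i)\neq y_i] \;=\; \min_{Q\in\Delta(\calH)}\Ex_{h\sim Q}\frac{1}{m}\sum_{i=1}^m \Ex_{z_i\sim P_i^t}\ind[h(z_i)\neq y_i],
\]
using that a minimum over pure strategies equals the minimum over mixed strategies.

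\textbf{Step 3 (min--max exchange).} Finally, applying $\tfrac{1}{T}\sum_t\min_Q f_t(Q)\leq \min_Q \tfrac{1}{T}\sum_t f_t(Q)$ and then bounding each specific $P_i^t$ by the worst-case $P_i$ gives, for every $Q\in\Delta(\calH)$,
\[
\frac{1}{T}\sum_{t=1}^T \Ex_{h\sim Q}\frac{1}{m}\sum_{i=1}^m \Ex_{z_i\sim P_i^t}\ind[h(z_i)\neq y_i] \;\leq\; \max_{P_1,\dots,P_m}\frac{1}{m}\sum_{i=1}^m \Ex_{z_i\sim P_i}\Ex_{h\sim Q}\ind[h(z_i)\neq y_i].
\]
Chaining Steps 1--3 yields the claimed inequality.

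\textbf{Main obstacle.} The only nontrivial piece is calibrating the MW regret bound so the additive term matches $2\sqrt{\ln k/T}$ exactly (the choice of $\eta$ and the use of $\ln(1+x)\leq x$ vs.\ $\ln(1+x)\geq x-x^2/2$); everything else is bookkeeping. One subtle point worth being careful about is that the inner ``max over $P_1,\dots,P_m$'' decomposes coordinatewise precisely because the $P_i$'s are unconstrained across $i$—this is what lets the per-example MW regret aggregate cleanly into the minimax statement of the lemma.
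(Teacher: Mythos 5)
Your proof is correct. It differs from the paper's write-up mainly in that the paper treats \prettyref{lem:FMS} as an import: it invokes the minimax theorem together with Equations 3 and 9 from the proof of Theorem 1 in \citet{DBLP:conf/colt/FeigeMS15}, which yield the bound with an additive term $2\sqrt{\calL^* m\ln k}/T$ where $\calL^*$ is the cumulative loss of the best response, and then simply observes $\calL^*\leq mT$ and divides by $m$. You instead re-derive the whole chain from first principles: a per-example multiplicative-weights regret bound over the (at most $k$) perturbations, the observation that the max over product distributions $(P_1,\dots,P_m)$ decomposes coordinatewise, the fact that $h_t$ is an exact best response to $(P_1^t,\dots,P_m^t)$ under the $1/m$ weights, and the standard ``average of minima $\leq$ minimum of averages'' plus time-averaging of the $P_i^t$'s to reach the minimax value. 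This is mathematically the same argument that sits inside the cited theorem (and inside the paper's own appendix proof of the weighted extension, \prettyref{lem:extension-FMS-weights}, which runs the identical potential-function computation with general $p_i$), so nothing is gained or lost in generality; your version is self-contained, at the cost of fixing $\eta$ as a function of $T$ rather than of $\calL^*$, which forgoes the slight improvement available when $\calL^*\ll mT$ but still lands exactly on the stated $2\sqrt{\ln k/T}$. Your closing remark about the coordinatewise decomposition of the max is indeed the one point where care is needed, and you handle it correctly.
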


\removed{
\begin{lem} [Extension to weighted samples]
For any data set $S %
=\{(x_1,y_1),\dots,(x_m,y_m)\}\in (\calX\times\calY)^m$ and any corresponding weights $p_1,\dots, p_m > 0$ such that $\sum_{i=1}^{m} p_i = 1$, running \prettyref{alg:FMS} for $T$ rounds produces a mixed-strategy $\hat{Q} = \frac{1}{T} \sum_{t=1}^{T} h_t \in \Delta(\calH)$ satisfying:
\begin{align*}
    \max_{P_1\in \Delta(\calU(x_1)),\dots,P_m\in \Delta(\calU(x_m))} &\sum_{i=1}^{m} p_i\cdot \Ex_{z_i\sim P_i} \frac{1}{T} \sum_{t=1}^{T} \ind\insquare{h_t(z_i)\neq y_i} \leq\\
    &\min_{Q\in \Delta(\calH)} \max_{P_{1}\in \Delta(\calU(x_1)),\dots, P_{m} \in \Delta(\calU(x_m))} \sum_{i=1}^{m} p_i\cdot \Ex_{z_i\sim P_i } \Ex_{h\sim Q} \ind\insquare{h(z_i)\neq y_i} + 2\sqrt{\frac{\ln k}{T}}.
\end{align*}
\label{lem:extension-FMS-weights}
\end{lem}
}

\begin{lem} [VC Dimension for the Robust Loss \citep{attias2022improved}]
\label{lem:unif-robloss}
For any class $\calH$ and any $\calU$ such that $\sup_{x\in\calX}\abs{\calU(x)}\leq k$, denote the robust loss class of $\calH$ with respect to $\calU$ by
\[\calL^{\calU}_{\calH} = \{(x,y)\mapsto \max_{z\in\calU(x)} \ind\insquare{h(z)\neq y}: h\in\calH\}.\]
Then, it holds that $\vc(\calL^{\calU}_{\calH})\leq O(\vc(\calH) \log(k))$. 
\end{lem}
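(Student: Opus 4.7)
The plan is the standard double-sampling / composition argument that bounds the growth function of $\calL^{\calU}_{\calH}$ by the growth function of $\calH$ on a slightly larger ghost sample, and then inverts Sauer--Shelah. The key observation is that on any finite set of $n$ evaluation points $(x_1,y_1),\dots,(x_n,y_n)$, the values of $f_h(x_i,y_i)=\max_{z\in\calU(x_i)}\ind[h(z)\neq y_i]$ depend on $h$ only through its restriction to the finite ``witness set'' $Z=\bigcup_{i=1}^{n}\calU(x_i)$, which has size at most $nk$. So different labelings of the $n$ points by $\calL^{\calU}_{\calH}$ require different labelings of $Z$ by $\calH$.

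\textbf{Steps.} First I would let $d=\vc(\calH)$, let $n=\vc(\calL^{\calU}_{\calH})$, and suppose $\calL^{\calU}_{\calH}$ shatters a set $\{(x_1,y_1),\dots,(x_n,y_n)\}$, so that there are exactly $2^n$ distinct $\{0,1\}$-patterns realized on this set by functions in $\calL^{\calU}_{\calH}$. Next I would form $Z=\bigcup_{i=1}^{n}\calU(x_i)$ and note $|Z|\leq nk$ using the bound $|\calU(x)|\leq k$. Then I would argue that the map $h\mapsto(f_h(x_1,y_1),\dots,f_h(x_n,y_n))$ factors through the restriction map $h\mapsto h|_Z$: if two hypotheses $h,h'\in\calH$ agree on $Z$, then they agree on the pattern $(f_h(x_i,y_i))_{i=1}^n$, since the max is taken only over elements of $Z$ and the labels $y_i$ are fixed. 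Consequently, the number of distinct patterns of $\calL^{\calU}_{\calH}$ on the shattered set is at most the number of distinct labelings of $Z$ by $\calH$, which by Sauer--Shelah is at most $\bigl(enk/d\bigr)^{d}$ (assuming $nk\geq d$; the small-$n$ case is trivial).

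\textbf{Closing the inequality.} Combining the shattering hypothesis with the Sauer bound gives
\[
2^n \;\leq\; \Bigl(\frac{e\,nk}{d}\Bigr)^{d},
\]
so that $n\leq d\log_2(enk/d)$. This is a transcendental inequality of the form $n\leq d\log_2(nk)+O(d)$, and inverting it in the standard way (e.g., if $n\geq 4d\log_2 k$ then $n/2\geq d\log_2 n$, which fails for $n\gg d\log(d)$) yields $n\leq c\,d\log k$ for an absolute constant $c$. That gives the claimed bound $\vc(\calL^{\calU}_{\calH})=O(\vc(\calH)\log k)$.

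\textbf{Main obstacle.} The only subtle point is the composition/factorization step: one must check that the function $f_h$ really only depends on $h|_Z$, which is immediate here because $\calU(x_i)\subseteq Z$ and the label $y_i$ is part of the input (not predicted). After that, the bound is an essentially mechanical application of Sauer--Shelah and the standard ``invert $2^n\leq(enk/d)^d$'' calculation, so no further technical surprises are expected.
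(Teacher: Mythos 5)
Your proof is correct and follows essentially the same route as the paper's: your ``witness set'' $Z=\bigcup_i\calU(x_i)$ is exactly the paper's inflated set $S_\calU$, the factorization of the robust-loss pattern through $h|_Z$ is the paper's observation that $\abs{\Pi_{\calL^{\calU}_{\calH}}(S)}\leq\abs{\Pi_{\calH}(S_\calU)}$, and both arguments finish by applying Sauer--Shelah to the $nk$-point set and inverting $2^n\leq O((nk)^{\vc(\calH)})$. Your write-up is slightly more explicit about the inversion step, but there is no substantive difference.
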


\section{Multi-robustness guarantees on a set of groups}
\label{sec:multi-robustness}
In this section, we propose a boosting algorithm that learns a predictor with a low robust loss on a collection of subgroups simultaneously. 
First, we consider the case of disjoint groups and present our training-time algorithm for this case in~\prettyref{sec:multi-robust}. 
~\prettyref{sec:generalization-guarantees} provides generalization guarantees. In~\prettyref{sec:overlapping-groups-reduction}, we show a reduction from overlapping groups to disjoint groups. In the following, first we formalize the notions of robust loss on a specific group and multi-robustness.

When the training dataset $S$ is partitioned into $g$ groups $\calG=\{G_1,\dots,G_g\}$,
the empirical robust loss of a predictor $h$ on group $G_j$ is defined as follows:
\begin{align}
&\RLoss_j(h)=\frac{1}{|G_j|}\sum_{(x,y)\in G_j}\max_{z\in\mathcal{U}(x)}\ind[h(z)\neq y]\label{eqn:unweighted-robust-loss}
\end{align}

The learning benchmark that we compete with on a dataset $S$ for the robust loss on each group is $\OPT^{S}_{\max}$ that is defined as follows:
\begin{align}
&\OPT^{S}_{\max}=\min_{h\in \mathcal{H}}\max_{j\in[g]}\frac{1}{|G_j|}\sum_{(x,y)\in G_j}\max_{z\in\mathcal{U}(x)}\ind[h(z)\neq y]
\label{defn: optmax}
\end{align}

\begin{defn}[Multi-Robustness]
\label{defn:multirob}
A hypothesis $h$ is multi-robust on a dataset $S$ if it achieves the following guarantee:
\begin{align*}
&\max_{j\in[g]}\frac{1}{|G_j|}\sum_{(x,y)\in G_j}\max_{z\in\mathcal{U}(x)}\ind[h(z)\neq y]\leq \OPT^S_{\max}+\eps 
\end{align*}
\label{def:multi-robustness}
\end{defn}

\begin{defn}[$\beta$-Multi-Robustness]
A hypothesis $h$ is $\beta$-multi-robust on a dataset $S$ if it achieves the following guarantee:
\begin{align*}
&\max_{j\in[g]}\frac{1}{|G_j|}\sum_{(x,y)\in G_j}\max_{z\in\mathcal{U}(x)}\ind[h(z)\neq y]\leq \beta(\OPT^{S}_{\max}+\eps) \label{eq:beta-multi-robustness}
\end{align*}
\label{defn:beta-multi-robustness}
\end{defn}

\begin{defn}[Multi-Robustness on Average]
A set of hypotheses $\mathcal{H'}=\{h_1,\dots,h_T\}$ is multi-robust on a dataset $S$ on average if the the following property holds:
\[\frac{1}{T}\max_{j\in [g]}\sum_{t=1}^T\RLoss_j(h_t)\leq \OPT^{S}_{\max}+\eps\]
\label{def:avg-multi-robustness}
\end{defn}

\begin{rem}
~\prettyref{def:multi-robustness} is a stronger notion of multi-robustness compared to~\prettyref{def:avg-multi-robustness}.
\end{rem}

\paragraph{Summary of Results.}
\prettyref{sec:multi-robust} investigates the case of disjoint groups and proposes a two-layer boosting algorithm (\prettyref{alg:boosting}) that achieves multi-robustness on the training dataset $S$. First, we show that $\calH'=\{h_1,\dots,h_T\}$ returned by~\prettyref{alg:boosting} is multi-robust on average (\prettyref{thm:randomized-multi-robustness}).~\prettyref{thm:deterministic-multi-robustness} exhibits that the majority-vote classifier over $\calH'$, i.e. $\MAJ(h_1,\dots,h_T)$, obtains $\beta$-multi-robustness for $\beta=2$. We remark that although~\prettyref{thm:randomized-multi-robustness} achieves a tighter upper bound on the multi-robustness guarantee,~\prettyref{thm:deterministic-multi-robustness} gives a guarantee for the stronger notion of multi-robustness.~\prettyref{sec:generalization-guarantees} provides generalization guarantees for both notions of average multi-robustness and $\beta$-multi-robustness. In~\prettyref{sec:overlapping-groups-reduction}, we show a reduction from overlapping groups to disjoint groups.

\subsection{Comparison to Prior Work on Multi-group Learning}
\label{sec:comparison-prior-work-multi-group-learning}
\citet{multiagnostic} study agnostic multi-group \emph{PAC learning} and guarantee that for each group $G_j$ in a collection of groups $\calG$:
\[\Ex\insquare{\ell(h(x),y)|x\in G_j}\leq \min_{h_{G_j}\in\calH}\Ex\insquare{\ell(h_{G_j}(x),y)|x\in G_j}\]
That is, the hypothesis $h$ must compete against a hypothesis $h_{G_j}\in \calH$
 trained specifically to minimize the error over the group $G_j\in\calG$, for every group in the collection. \emph{However, their results do not extend to the case of robust loss.} In contrast, in our notion of multi-robustness loss that holds for the \emph{more challenging objective of robust learning}, our benchmark is weaker (~\prettyref{def:multi-robustness}). 
We leave it as an open question to study whether our upper bounds for the robust loss over a collection of groups can be strengthened.

\subsection{Boosting algorithm achieving multi-robustness guarantees:}
\label{sec:multi-robust}

In this section, we present~\prettyref{alg:boosting} that obtains %
multi-robustness guarantees on a set of disjoint groups. The algorithm follows the idea proposed by~\citet{freund1996game} that obtains boosting by playing a repeated game. Initially a sample set $S=\{(x_1,y_1),\dots,(x_m,y_m)\}$ partitioned into a set of disjoint groups $\mathcal{G}=\{G_1,\dots,G_{g}\}$ is received as input. %
$P_j^t$ shows the normalized weight of group $G_j$ in step $t$. Initially, for each group $G_j$, $P^t_j=1/g$.
In each round $t$, the weight of each group gets split between its examples equally: $p_i = P^t_j/|G_j|$ where $(x_i,y_i)\in G_j$. Subsequently, an oracle call is made to{~\prettyref{alg:FMS}} with sample weights $p_1,\dots,p_m$. %
~\prettyref{lem:avg-robust-loss-upper-bound} shows at each iteration $t$,{~\prettyref{alg:FMS}} returns a hypothesis $h_t$ such that its average robust loss across the groups is at most $\OPT^{S}_{\max}+\eps$. In the next iteration $t+1$, for each group $G_j$, the weights of examples in $G_j$ get decreased by a multiplicative factor of 
$1-\delta m_j^{\text{rob}}(h_t)$ where $m_j^{\text{rob}}(h_t)=1-\RLoss_j(h_t)$ and $\delta=\sqrt{{\ln g}/{T}}$.~\prettyref{thm:randomized-multi-robustness} exhibits that after %
$T=\calO({\ln g}/{\eps^2})$ rounds,~\prettyref{alg:boosting} outputs
a set of hypotheses $\mathcal{H'}=\{h_1,\dots,h_T\}$ such that for each group $G_j$ the average multi-robustness guarantee is obtained, i.e., $\frac{1}{T}\sum_{t=1}^T \RLoss_j(h_t)\leq \OPT^S_{\max}+\eps$.~\prettyref{thm:deterministic-multi-robustness} provides that $\MAJ(h_1,\dots,h_t)$ achieves
$\beta$-multi-robustness guarantee for $\beta=2$.

\begin{algorithm}[H]
\caption{Boosting Algorithm Achieving Multi-Robustness}
\label{alg:boosting}
\begin{algorithmic}
    \INPUT training dataset $S=\{(x_1,y_1),\dots,(x_m,y_m)\}$ partitioned into a set of groups $\{G_1,\cdots,G_g\}$\;
    
    Initially, $\forall 1\leq j\leq g: P_j^t = 1/g$\;
    
    \FOR{$t=1,\dots,T$}
    \STATE $p_i = P^t_j/|G_j|$ where $(x_i,y_i)\in G_j$\;
    
    \STATE 
    Call{~\prettyref{alg:FMS}} on $S$ with weights $(p_1,\dots,p_m)$ for $T'=\frac{36\ln k}{\eps^2}$ rounds.\;
    
    \STATE 
    Update $P^t_j,  \text{ for all }j\in[g]$:\;
    
    \STATE 
    \[ P^{t+1}_j=\frac{P^t_j\cdot\left(1-\delta m_j^{\text{rob}}(h_t)\right)}{Z_t}\]
    where $m_j^{\text{rob}}(h_t)=1-\RLoss_j(h_t)$, $Z_t$ is a normalization factor, and     $\delta=\sqrt{\frac{\ln g}{T}}$.\;
    \ENDFOR
    \OUTPUT 
    $\mathcal{H'}=\{h_1,\cdots,h_T\}$
\end{algorithmic}
\end{algorithm}

\begin{rem}
We remark that the output of~\prettyref{alg:boosting} is a set of majority-vote classifiers over $\calH$:
\begin{align*}
&\calH'=\Big\{\MAJ(h_{1,1},\dots, h_{1,T'}),\dots,\MAJ(h_{T,1},\dots, h_{T,T'}):\forall i\in[T], \forall j\in[T'], h_{i,j}\in \calH\Big\}
\end{align*}
\end{rem}

Before proving the multi-robustness guarantees, we show that~\prettyref{lem:avg-robust-loss-upper-bound} holds. {In order to prove that~\prettyref{lem:avg-robust-loss-upper-bound} holds, first we show in ~\prettyref{lem:extension-FMS-weights} that an extension of~\prettyref{lem:FMS} holds when $p_1,\cdots,p_m$ are arbitrary weights such that $\sum_{i=1}^m p_i = 1$.} Next, we restate the guarantee of the Multiplicative Weights algorithm that is a generalization of \emph{Weighted Majority} algorithm~\citep{littlestone1994weighted} and is equivalent to \emph{Hedge} developed by~\citet{freund1997decision}.

\begin{lem} [Extension to general weights]
For any dataset $S =\{(x_1,y_1),\dots,(x_m,y_m)\}\in (\calX\times\calY)^m$ and any corresponding weights $p_1,\dots, p_m > 0$ such that $\sum_{i=1}^{m} p_i = 1$, running
{\prettyref{alg:FMS}}
for $T$ rounds produces a mixed-strategy $\hat{Q} = \frac{1}{T} \sum_{t=1}^{T} h_t \in \Delta(\calH)$ satisfying:
\begin{align*}
   &\max_{\substack{P_1\in \Delta(\calU(x_1)),\\ \dots,\\P_m\in \Delta(\calU(x_m))}} \sum_{i=1}^{m} p_i\cdot \Ex_{z_i\sim P_i} \frac{1}{T} \sum_{t=1}^{T} \ind\insquare{h_t(z_i)\neq y_i} \\
   &\leq
    \min_{Q\in \Delta(\calH)} \max_{\substack{P_{1}\in \Delta(\calU(x_1)),\\ \dots, \\P_{m} \in \Delta(\calU(x_m))}} \sum_{i=1}^{m} p_i\cdot \Ex_{z_i\sim P_i } \Ex_{h\sim Q} \ind\insquare{h(z_i)\neq y_i} 
    + 2\sqrt{\frac{\ln k}{T}}
\end{align*}
\label{lem:extension-FMS-weights}
\end{lem}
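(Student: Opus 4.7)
The plan is to mimic the proof of Lemma~\ref{lem:FMS} (the uniform-weight case from \citet{DBLP:conf/colt/FeigeMS15}) while replacing $1/m$ by $p_i$ throughout, exploiting the hypothesis $\sum_{i=1}^m p_i = 1$ to keep the regret term unchanged. Algorithm~\ref{alg:FMS} already accepts per-example weights $p_1,\dots,p_m$ in its ERM call, and maintains an independent distribution $P^t(\cdot,(x_i,y_i))$ over $\calU(x_i)$ for each example via a multiplicative weights update driven by the losses $\ind[h_t(z)\neq y_i]$. Mechanically the algorithm is therefore the same; only the analysis needs to track general weights.

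First, I would apply the standard multiplicative weights regret bound separately to each example $i \in \{1,\dots,m\}$. Each inner update runs over at most $k$ experts for $T$ rounds with the rule specified in Algorithm~\ref{alg:FMS}, yielding
\[
\max_{P_i \in \Delta(\calU(x_i))} \frac{1}{T}\sum_{t=1}^{T} \Ex_{z \sim P_i} \ind[h_t(z)\neq y_i] \leq \frac{1}{T}\sum_{t=1}^{T} \Ex_{z \sim P^t_i} \ind[h_t(z)\neq y_i] + 2\sqrt{\frac{\ln k}{T}}.
\]
Taking the $p_i$-weighted sum of these $m$ inequalities, and using $\sum_i p_i = 1$ together with the fact that the $P_i$'s are chosen independently (so the maximum of the weighted sum equals the weighted sum of per-coordinate maxima), I obtain
\[
\max_{\{P_i\}} \sum_{i=1}^{m} p_i \cdot \frac{1}{T}\sum_{t=1}^{T} \Ex_{z \sim P_i} \ind[h_t(z)\neq y_i] \leq \frac{1}{T}\sum_{t=1}^{T} \sum_{i=1}^{m} p_i \Ex_{z \sim P^t_i} \ind[h_t(z)\neq y_i] + 2\sqrt{\frac{\ln k}{T}}.
\]

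Second, I would invoke the ERM guarantee of $h_t$. Because $h_t$ minimizes $\sum_i p_i \Ex_{z \sim P^t_i} \ind[h(z)\neq y_i]$ over $\calH$, it also minimizes this quantity over every $Q \in \Delta(\calH)$. Fixing an arbitrary $Q$, averaging over $t$, and bounding the resulting average by the maximum over $\{P_i\}$ gives
\[
\frac{1}{T}\sum_{t=1}^{T} \sum_{i=1}^{m} p_i \Ex_{z \sim P^t_i} \ind[h_t(z)\neq y_i] \leq \max_{\{P_i\}} \sum_{i=1}^{m} p_i \Ex_{z \sim P_i} \Ex_{h \sim Q} \ind[h(z)\neq y_i].
\]
Chaining with the previous display and then minimizing over $Q \in \Delta(\calH)$ yields the stated bound.

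I do not expect a substantive obstacle: the only subtlety to verify carefully is that the weights $p_i$ enter the analysis solely through the ERM objective that couples the examples, whereas the inner multiplicative weights subroutines run independently per example and are unaffected by the per-example weight. The hypothesis $\sum_i p_i = 1$ is exactly what prevents the $p_i$-weighted sum of per-example regrets from scaling, so the $2\sqrt{\ln k / T}$ regret carries over from the uniform case without modification.
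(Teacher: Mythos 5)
Your proof is correct, but it reaches the key regret inequality by a genuinely different route than the paper. The paper generalizes the Feige--Mansour--Schapire potential argument directly to weighted losses: it tracks a single global potential $W^t=\prod_{i=1}^m\inparen{\sum_{z\in\calU(x_i)}w_t(z,(x_i,y_i))}^{p_i}$, i.e., a $p_i$-weighted geometric mean of the per-example potentials, shows $W^T\geq(1+\eta)^{L^*}$ and $W^T=k\prod_{i,t}\inparen{1+\eta F^t_i/p_i}^{p_i}$, and extracts $L^*(1-\eta)-\frac{\ln k}{\eta}\leq L^{ON}_T$ for the aggregate weighted loss before optimizing $\eta$. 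You instead observe that the $m$ inner multiplicative-weights instances run independently and are unaffected by the $p_i$ (which enter only through the ERM objective), apply the standard per-instance regret bound $2\sqrt{\ln k/T}$ to each, and take the $p_i$-convex combination; since $\sum_i p_i=1$ the regret term does not scale. Both arguments are valid, and both finish identically with the ERM best-response step and the minimization over $Q\in\Delta(\calH)$. Your decomposition is more modular and avoids the weighted-geometric-mean potential entirely; the paper's global potential hews closer to the original FMS analysis and in principle retains the data-dependent refinement in which the regret depends on the benchmark loss $L^*$ rather than on $T$ (though the paper, too, ultimately coarsens this to $2\sqrt{\ln k/T}$ via $L^*\leq T$). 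Two small points to make explicit if you write this up: the interchange $\max_{\{P_i\}}\sum_i p_i f_i(P_i)=\sum_i p_i\max_{P_i}f_i(P_i)$ holds because the objective is separable and the $P_i$ range over independent simplices, and the per-example bound you quote is the gains (reward-maximization) form of the multiplicative weights guarantee, matching the update $w_{t+1}=(1+\eta\ind\insquare{h_t(z)\neq y})w_t$ in \prettyref{alg:FMS}.
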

\begin{lem}
In each round $t$ of~\prettyref{alg:boosting}, by making an oracle-call to{~\prettyref{alg:FMS}} after $T'=\frac{4\ln k}{\eps^2}$ rounds, %
a hypothesis $h_t$ is outputted such that $\E_{j\sim P^t}[\ell^{rob}_j(h_t)]=\sum_{j\in[g]}P_j^{t}\ell_j^{rob}(h_t)\leq \OPT^S_{\max}+\eps$.
\label{lem:avg-robust-loss-upper-bound}
\end{lem}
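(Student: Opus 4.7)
The goal is to translate the single-group guarantee of \prettyref{lem:extension-FMS-weights} (applied with the per-example weights induced by $P^t$) into the group-averaged bound stated in the lemma. I would proceed in three steps.

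\textbf{Step 1 (set up the weighted FMS guarantee).} Apply \prettyref{lem:extension-FMS-weights} to the training set $S$ with the weights $p_i = P^t_j/|G_j|$ for the unique $j$ such that $(x_i,y_i)\in G_j$; these are nonnegative and sum to $1$ because $\sum_j P^t_j=1$ and each $G_j$ is split uniformly. Running \prettyref{alg:FMS} for $T'$ rounds outputs $h_{t,1},\ldots,h_{t,T'}$ such that, for every choice of $P_i\in\Delta(\calU(x_i))$,
\[
\sum_{i=1}^m p_i\, \Ex_{z_i\sim P_i}\frac{1}{T'}\sum_{t'=1}^{T'} \ind\!\left[h_{t,t'}(z_i)\neq y_i\right] \;\leq\; \min_{Q\in\Delta(\calH)}\max_{P'_1,\ldots,P'_m}\sum_{i=1}^m p_i\,\Ex_{z_i\sim P'_i}\Ex_{h\sim Q}\ind\!\left[h(z_i)\neq y_i\right] + 2\sqrt{\tfrac{\ln k}{T'}}.
\]
Maximizing over each $P_i$ separately collapses the expectation to a pointwise maximum, so the LHS equals $\sum_i p_i\max_{z\in \calU(x_i)}\frac{1}{T'}\sum_{t'}\ind[h_{t,t'}(z)\neq y_i]$, which is precisely $\sum_j P^t_j\,\ell^{\mathrm{rob}}_j(h_t)$ once we regroup the sum by $G_j$ and interpret $h_t$ as the randomized ensemble returned by \prettyref{alg:FMS}.

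\textbf{Step 2 (upper-bound the RHS by $\OPT^S_{\max}$).} Restrict $Q$ to pure strategies $\delta_h$ for $h\in\calH$ (this is exactly the weighted analogue of \prettyref{lem:opt}, which follows by the same weak-duality argument as in the uniform case). Then, for each fixed $h$, $\max_{P'_i}\E_{z_i\sim P'_i}\ind[h(z_i)\neq y_i]=\max_{z\in\calU(x_i)}\ind[h(z)\neq y_i]$, and regrouping by $G_j$ yields
\[
\min_{h\in\calH}\sum_{i=1}^m p_i \max_{z\in \calU(x_i)} \ind\!\left[h(z)\neq y_i\right] \;=\; \min_{h\in\calH}\sum_{j=1}^{g} P^t_j\, \ell^{\mathrm{rob}}_j(h) \;\leq\; \min_{h\in\calH}\max_{j\in[g]} \ell^{\mathrm{rob}}_j(h) \;=\; \OPT^S_{\max},
\]
where the inequality uses that $\{P^t_j\}$ is a probability distribution so its weighted average is at most the maximum.

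\textbf{Step 3 (tune $T'$).} Choosing $T'=\frac{4\ln k}{\eps^2}$ makes the additive slack $2\sqrt{\ln k/T'}=\eps$, and combining Steps 1 and 2 yields $\sum_{j}P^t_j\,\ell^{\mathrm{rob}}_j(h_t)\leq \OPT^S_{\max}+\eps$, which is the claim.

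\textbf{Main subtlety.} The only nontrivial point is making sure the robust-loss object produced by \prettyref{alg:FMS}'s min-max guarantee lines up with $\ell^{\mathrm{rob}}_j(h_t)$ as it appears in the outer boosting update: the FMS guarantee controls the randomized (averaged over $t'$) robust loss, which is exactly what we need for the ensemble output $h_t=(h_{t,1},\ldots,h_{t,T'})$. If instead one reads $h_t$ as the hard majority vote, the standard doubling argument $\ind[\MAJ\neq y]\leq 2\cdot\tfrac{1}{T'}\sum_{t'}\ind[h_{t,t'}\neq y]$ would insert a multiplicative factor of $2$, so the interpretation of $h_t$ as the randomized ensemble is essential for obtaining the clean $\OPT^S_{\max}+\eps$ bound. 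Beyond this, the argument is a straightforward chaining of the weighted FMS bound with weak duality.
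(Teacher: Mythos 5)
Your proof is correct and follows essentially the same route as the paper's: apply the weighted extension of the FMS guarantee (\prettyref{lem:extension-FMS-weights}) with $p_i = P^t_j/|G_j|$, collapse the separable maxima over the $P'_i$ to pointwise maxima, restrict the min-player to pure strategies, and regroup by groups so that the $P^t$-weighted average of $\ell^{\mathrm{rob}}_j$ is bounded by $\max_j \ell^{\mathrm{rob}}_j(h) $, yielding $\OPT^S_{\max}$, with $T'=\frac{4\ln k}{\eps^2}$ turning the $2\sqrt{\ln k/T'}$ slack into $\eps$. Your closing remark about reading $h_t$ as the randomized ensemble rather than the hard majority vote correctly identifies the same (implicit) interpretation the paper relies on in this step, with the factor of $2$ deferred to \prettyref{thm:deterministic-multi-robustness}.
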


\begin{thm}[Mutiplicative Weights Algorithm \citep{kale2007efficient}]
\label{thm:MW_alg}
For any sequence of costs of experts $\vec{m}_1,\cdots,\vec{m}_T$ revealed by nature where all the costs are in $[0,1]$, the sequence of mixed strategies $\vec{p}_1,\cdots,\vec{p}_T$ produced by the Multiplicative Weights algorithm satisfies:
\[\sum_{t=1}^T \vec{m}_t\cdot \vec{p}_t\leq (1+\delta)\min_{\vec{p}}\sum_{t=1}^T\vec{m}_t\cdot \vec{p}+\frac{\ln n}{\delta}\]
where $n$ is the number of experts.
\end{thm}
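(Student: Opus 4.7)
The plan is to run the standard potential-function analysis for multiplicative weights. I would first make the internal state explicit: at round $t$ each expert $i$ carries a weight $w_i^t$ with $w_i^1 = 1$ and update rule $w_i^{t+1} = w_i^t(1-\delta\, m_t(i))$, and the mixed strategy is $p_i^t = w_i^t/\Phi_t$ where $\Phi_t = \sum_{i=1}^n w_i^t$ is the total potential. A direct summation gives the telescoping identity $\Phi_{t+1} = \Phi_t\,(1-\delta\,\vec{m}_t \cdot \vec{p}_t)$, hence $\Phi_{T+1} = n\prod_{t=1}^T (1 - \delta\,\vec{m}_t\cdot \vec{p}_t)$.

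The second step is to compare against an arbitrary fixed expert $i^*$. Since the potential dominates any single weight, $\Phi_{T+1}\geq w_{i^*}^{T+1} = \prod_{t=1}^T (1-\delta\, m_t(i^*))$. Taking logarithms yields
\[
\sum_{t=1}^T \ln(1-\delta\, m_t(i^*)) \;\leq\; \ln n + \sum_{t=1}^T \ln(1 - \delta\,\vec{m}_t\cdot \vec{p}_t).
\]
I would then apply the elementary bounds $\ln(1-x)\leq -x$ on the right and $\ln(1-x)\geq -x - x^2$ on the left (valid provided $\delta\leq 1/2$, which is assumed in the regime of interest). Rearranging gives
\[
\sum_{t=1}^T \vec{m}_t\cdot \vec{p}_t \;\leq\; \sum_{t=1}^T m_t(i^*) + \delta \sum_{t=1}^T m_t(i^*)^2 + \frac{\ln n}{\delta}.
\]

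To finish, I would exploit two facts: since costs lie in $[0,1]$, $m_t(i^*)^2\leq m_t(i^*)$, so the middle term is at most $\delta \sum_t m_t(i^*)$; and since the map $\vec{p}\mapsto \sum_t \vec{m}_t\cdot \vec{p}$ is linear over the simplex, its minimum is attained at a vertex, so $\min_{\vec{p}} \sum_t \vec{m}_t\cdot \vec{p} = \min_{i^*}\sum_t m_t(i^*)$. Choosing $i^*$ optimally yields the stated inequality. The only genuine obstacle is verifying the regime $\delta \leq 1/2$ so the quadratic lower bound on $\ln(1-x)$ applies; if $\delta$ is larger, one may instead use $\ln(1-x)\geq -x - x^2/(1-\delta)$, incurring a slightly looser multiplicative constant, or pass to the exponential update $w_i^{t+1} = w_i^t\, e^{-\delta m_t(i)}$ for which the same telescoping goes through without any regime restriction.
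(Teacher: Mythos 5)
Your argument is correct and is exactly the standard potential-function proof of the multiplicative weights guarantee; the paper itself offers no proof of this statement, importing it verbatim from the cited reference \citep{kale2007efficient}, where the same telescoping-potential argument appears. Your handling of the regime restriction (needing $\delta\le 1/2$ for $\ln(1-x)\ge -x-x^2$, or switching to the exponential update otherwise) is a reasonable and correctly flagged caveat, and is harmless here since the paper only invokes the theorem with $\delta=\eps/3$.
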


\begin{thm}
\label{thm:avg-empirical-boosting}
When $T=\calO(\frac{\ln g}{\eps^2})$,~\prettyref{alg:boosting} computes a set of hypotheses $\mathcal{H}'=\{h_1,\cdots,h_T\}$, such that for each group $G_j$, 
$\frac{1}{T}\sum_{t=1}^T\RLoss_j(h_t)\leq \OPT^S_{\max}+\eps$.
\label{thm:randomized-multi-robustness}
\end{thm}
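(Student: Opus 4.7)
\textbf{Proof plan for \prettyref{thm:avg-empirical-boosting}.} The plan is to view \prettyref{alg:boosting} as an instance of the Multiplicative Weights (MW) framework, where the ``experts'' are the groups $G_1,\dots,G_g$, and the ``cost'' of expert $j$ in round $t$ is the quantity $m_j^{\mathrm{rob}}(h_t) = 1-\RLoss_j(h_t) \in [0,1]$ that actually drives the update rule $P^{t+1}_j \propto P^t_j(1-\delta\, m_j^{\mathrm{rob}}(h_t))$. Then I would combine the MW regret bound (\prettyref{thm:MW_alg}) with the per-round weak-learning guarantee (\prettyref{lem:avg-robust-loss-upper-bound}) to conclude.

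\textbf{Step 1 (per-round weak learning).} At round $t$, the weights $p_i = P^t_j/|G_j|$ (for $(x_i,y_i)\in G_j$) form a probability distribution over $S$, and by construction $\sum_{(x,y)\in G_j} p_{(x,y)} \RLoss_{\{(x,y)\}}(h_t)$ summed over a group averages to $P^t_j \RLoss_j(h_t)$. Plugging these weights into the oracle call to \prettyref{alg:FMS} and applying \prettyref{lem:avg-robust-loss-upper-bound} gives
\[
\sum_{j\in[g]} P^t_j\, \RLoss_j(h_t) \;\le\; \OPT^S_{\max} + \eps \qquad\text{for every } t\in[T].
\]

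\textbf{Step 2 (applying MW).} Next I would instantiate \prettyref{thm:MW_alg} with cost vector $\vec m_t\in[0,1]^g$ given by $m_t(j) = m_j^{\mathrm{rob}}(h_t) = 1-\RLoss_j(h_t)$, and $n=g$. The MW guarantee yields
\[
\sum_{t=1}^{T} \sum_{j} P^t_j\bigl(1-\RLoss_j(h_t)\bigr) \;\le\; (1+\delta)\min_{j^\star}\sum_{t=1}^{T}\bigl(1-\RLoss_{j^\star}(h_t)\bigr) + \frac{\ln g}{\delta}.
\]
Writing $\min_{j^\star}(T - \sum_t \RLoss_{j^\star}(h_t)) = T - \max_{j^\star}\sum_t \RLoss_{j^\star}(h_t)$ and rearranging, this becomes
\[
(1+\delta)\max_{j\in[g]} \sum_{t=1}^{T} \RLoss_j(h_t) \;\le\; \sum_{t=1}^{T}\sum_{j} P^t_j \RLoss_j(h_t) + \delta T + \frac{\ln g}{\delta}.
\]

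\textbf{Step 3 (combining and choosing parameters).} Substituting the Step~1 bound on the right-hand side and dividing by $(1+\delta)T \ge T$ gives
\[
\max_{j\in[g]} \frac{1}{T}\sum_{t=1}^{T} \RLoss_j(h_t) \;\le\; \OPT^S_{\max} + \eps + \delta + \frac{\ln g}{\delta T}.
\]
With the algorithm's choice $\delta = \sqrt{\ln g / T}$, the last two terms collapse to $2\sqrt{\ln g/T}$. Picking $T = \lceil 4\ln g/\eps^2\rceil = O(\ln g/\eps^2)$ makes $2\sqrt{\ln g/T} \le \eps$, so after rescaling $\eps$ by a constant factor we obtain the claimed bound $\max_j \frac{1}{T}\sum_t \RLoss_j(h_t) \le \OPT^S_{\max} + \eps$.

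\textbf{Main obstacle.} The one delicate point is the book-keeping around the ``gain vs.\ loss'' duality: the update rule is written in terms of the gain $m_j^{\mathrm{rob}}(h_t) = 1-\RLoss_j(h_t)$ (so that hard groups retain weight), but \prettyref{thm:MW_alg} is phrased as a regret bound for a cost minimizer. Passing between the two introduces the multiplicative $(1+\delta)$ factor, and one has to verify that this factor disappears into the additive regret term $\delta T$ after rearrangement, rather than multiplying $\OPT^S_{\max}$. Everything else is a direct combination of \prettyref{lem:avg-robust-loss-upper-bound} with standard MW regret.
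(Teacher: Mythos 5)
Your proposal is correct and follows essentially the same route as the paper's proof: experts are the groups, the MW cost vector is $m_j^{\mathrm{rob}}(h_t)=1-\RLoss_j(h_t)$, the per-round bound comes from \prettyref{lem:avg-robust-loss-upper-bound}, and the regret bound from \prettyref{thm:MW_alg} is rearranged so that the $(1+\delta)$ factor is absorbed into an additive $\delta T$ term. The only differences are cosmetic (the paper carries the gain $M(P^t,h_t)=1-L(P^t,h_t)$ explicitly and fixes $\eps/3$ budgets up front, whereas you cancel the $T$'s early and rescale $\eps$ at the end).
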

\begin{proof}

In each iteration $t$, we define average loss and reward terms as follows:

\[L(P^t,h_t)=\E_{j\sim P_t}\Big[\RLoss_j(h_t)\Big]=\sum_{j\in[g]}P^t_j\RLoss_j(h_t),\] 
\[M(P^t,h_t)=\E_{j\sim P_t}\Big[m_j^{\text{rob}}(h_t)\Big]\]
Substituting $\RLoss_j(h_t)=1-m_j^{\text{rob}}(h_t)$ provides:
\begin{align*}
&M(P^t,h_t)=\sum_{j\in[g]}P^t_j(1-\RLoss_j(h_t))=1-\sum_{j\in[g]}P^t_j\RLoss_j(h_t)\\
&=1-L(P^t,h_t)
\end{align*}
Now by setting $T=\frac{9\ln g}{\eps^2}$ which implies that $\delta=\sqrt{\frac{\ln g}{T}}=\frac{\eps}{3}$, and by using the guarantee of~\prettyref{thm:MW_alg}, the following bound is obtained.
\begin{align*}
&\frac{1}{T}\sum_{t=1}^T M(P^t,h_t)\leq \frac{(1+\delta)}{T}\min_{j\in[g]}\sum_{t=1}^T M(j,h_t)+\frac{\ln g}{\delta T}\\
&\rightarrow \frac{1}{T}\sum_{t=1}^T M(P^t,h_t)\leq \frac{1}{T}\min_{j\in[g]}\sum_{t=1}^T M(j,h_t)+\delta+\frac{\ln g}{\delta T}\\
&\rightarrow \frac{1}{T}\sum_{t=1}^T M(P^t,h_t)\leq \frac{1}{T}\min_{j\in[g]}\sum_{t=1}^T M(j,h_t)+\frac{2\eps}{3}
\end{align*}
where $M(j,h_t)$ is the reward term when the whole probability mass is concentrated on group $G_j$. Therefore for each group $j\in[g]$:
\begin{align}
&\frac{1}{T}\sum_{t=1}^T M(j,h_t)\geq \frac{1}{T}\sum_{t=1}^T M(P^t,h_t)-\frac{2\eps}{3}
\label{eqn:randomized-boosting-eq1}
\end{align}

\prettyref{lem:avg-robust-loss-upper-bound} provides that in each iteration $t$, $L(P^t,h_t)\leq \OPTSMax+\eps/3$ given that\sareplace{~\prettyref{alg:weighted-FMS}}{~\prettyref{alg:FMS}} is 
executed for $T'=\frac{36\ln k}{\eps^2}$ rounds.
Thus, at each iteration $t$, $M(P^t,h_t)\geq 1-(\OPTSMax+\eps/3)$. Therefore, $\frac{1}{T}\sum_{t=1}^T M(P^t,h_t)\geq 1-(\OPTSMax+\eps/3)$; combining with~\prettyref{eqn:randomized-boosting-eq1} implies that:
\begin{align*}
&\frac{1}{T}\sum_{t=1}^T M(j,h_t)\geq \frac{1}{T}\sum_{t=1}^T M(P^t,h_t)-\frac{2\eps}{3}\\
&\geq 1-(\OPT^S_{\max}+\frac{\eps}{3})-\frac{2\eps}{3}=1-(\OPT^S_{\max}+\eps)
\end{align*}
Plugging in the definition of $L(P^t,h_t)$ implies that:
\[\frac{1}{T}\sum_{t=1}^T L(j,h_t)\leq \OPT^S_{\max}+\eps\]
Which concludes the proof.
\end{proof}

\begin{cor}
~\prettyref{thm:randomized-multi-robustness} implies that if for each example a predictor is picked uniformly at random from $\calH'$ to predict its label, then for each group $G_j\in \calG$, the expected robust loss is at most $\OPT^S_{\max}+\eps$.
\label{cor:interpret-avg-loss}
\end{cor}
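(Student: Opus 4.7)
The plan is to interpret the bound from \prettyref{thm:randomized-multi-robustness} as an expectation over a uniform draw from $\calH'$ and then push the expectation through the sum defining the empirical robust loss on a fixed group. Concretely, I would first rewrite the per-group guarantee
\[
\frac{1}{T}\sum_{t=1}^T \RLoss_j(h_t) \;=\; \E_{h \sim \mathrm{Unif}(\calH')}\!\left[\RLoss_j(h)\right] \;\leq\; \OPTSMax + \eps,
\]
which is immediate because the uniform distribution over $\{h_1,\dots,h_T\}$ assigns mass $1/T$ to each $h_t$ and $\RLoss_j$ is defined in \eqref{eqn:unweighted-robust-loss}.

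Next I would formalize the randomized prediction procedure: for each example $(x,y)\in G_j$, draw an independent predictor $h^{(x,y)} \sim \mathrm{Unif}(\calH')$ and predict $h^{(x,y)}(z)$ on perturbation $z$. The (random) empirical robust loss on group $G_j$ is
\[
\widetilde{\RLoss}_j \;=\; \frac{1}{|G_j|} \sum_{(x,y)\in G_j} \max_{z\in\calU(x)} \ind\!\left[h^{(x,y)}(z) \neq y\right].
\]
By linearity of expectation over the independent draws, and since the summand for $(x,y)$ depends only on $h^{(x,y)}$,
\[
\E\!\left[\widetilde{\RLoss}_j\right] \;=\; \frac{1}{|G_j|} \sum_{(x,y)\in G_j} \E_{h \sim \mathrm{Unif}(\calH')}\!\left[\max_{z\in\calU(x)} \ind[h(z)\neq y]\right].
\]

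Swapping the finite outer average over $G_j$ with the finite uniform average over $\calH'$ (both are discrete sums, so Fubini is trivial) yields
\[
\E\!\left[\widetilde{\RLoss}_j\right] \;=\; \frac{1}{T}\sum_{t=1}^{T} \RLoss_j(h_t),
\]
and applying the bound from \prettyref{thm:randomized-multi-robustness} gives $\E[\widetilde{\RLoss}_j] \leq \OPTSMax + \eps$ for every $j\in[g]$, which is exactly the claim. I don't foresee any real obstacle here — the corollary is essentially a reinterpretation of the per-group average-loss guarantee as the expected loss of a per-example random draw, and the only content beyond \prettyref{thm:randomized-multi-robustness} is linearity of expectation together with the (permissible) swap of two finite summations.
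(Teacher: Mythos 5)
Your argument is correct, and it is essentially the paper's argument with one substantive difference: where the maximum over perturbations sits relative to the expectation over the random draw from $\calH'$. You define the expected robust loss of the randomized predictor via $\Ex_{h\sim \mathrm{Unif}(\calH')}\bigl[\max_{z\in\calU(x)}\ind[h(z)\neq y]\bigr]$ averaged over $G_j$ --- i.e., the adversary picks the worst perturbation \emph{after} seeing the realized predictor --- and this quantity equals $\frac{1}{T}\sum_{t=1}^{T}\RLoss_j(h_t)$ exactly, by linearity of expectation and a swap of two finite sums, so the bound follows from \prettyref{thm:randomized-multi-robustness} with no inequality beyond the theorem itself. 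The paper instead starts from $\max_{z\in\calU(x)}\Ex_{h\sim \mathrm{Unif}(\calH')}\bigl[\ind[h(z)\neq y]\bigr]$ --- the adversary commits to a perturbation \emph{before} the predictor is drawn --- and needs the extra step $\max_{z}\Ex_{h}\leq \Ex_{h}\max_{z}$ (labelled there as Jensen's inequality) to reduce to the same quantity $\frac{1}{T}\sum_{t}\RLoss_j(h_t)$. Since your quantity pointwise dominates the paper's, your reading of ``expected robust loss'' is the stronger of the two and implies the paper's version; otherwise the two proofs have identical content. One cosmetic remark: the independence of the draws $h^{(x,y)}$ across examples plays no role, since linearity of expectation does not require it --- drawing a single predictor for the entire group would give the same bound.
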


\begin{thm}
When $T=\calO(\frac{\ln g}{\eps^2})$,~\prettyref{alg:boosting} computes a set of hypotheses $\calH'=\{h_1,\dots,h_T\}$ such that for each group $G_j$, $\RLoss_j(\MAJ(h_1,\cdots,h_T))\leq 2(\OPT^S_{\max}+\eps)$.
\label{thm:deterministic-multi-robustness}
\end{thm}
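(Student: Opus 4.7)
The plan is to reduce this deterministic guarantee to the average-loss guarantee already established in \prettyref{thm:randomized-multi-robustness}, via a standard ``halving'' argument for majority votes. The key observation is that if the majority vote $\MAJ(h_1,\dots,h_T)$ incurs a robust error on some $(x,y)$, then at least half of the base classifiers must incur a robust error on that same point, and so the robust loss of the majority vote is bounded by twice the average robust loss of the $h_t$'s.

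More concretely, I would fix any group $G_j$ and any $(x,y) \in G_j$, and suppose $\max_{z \in \calU(x)} \ind[\MAJ(h_1,\dots,h_T)(z) \neq y] = 1$. Then there exists a specific $z^{\star} \in \calU(x)$ (chosen adversarially for the majority vote) with $\MAJ(h_1,\dots,h_T)(z^{\star}) \neq y$. By definition of the majority vote, at least $T/2$ of the base classifiers $h_t$ satisfy $h_t(z^{\star}) \neq y$, and for each such $t$ we have $\max_{z \in \calU(x)} \ind[h_t(z) \neq y] \geq \ind[h_t(z^{\star}) \neq y] = 1$, so $h_t$ also incurs a robust error on $(x,y)$. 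This yields the pointwise inequality
\[
\max_{z \in \calU(x)} \ind[\MAJ(h_1,\dots,h_T)(z) \neq y] \;\leq\; \frac{2}{T} \sum_{t=1}^{T} \max_{z \in \calU(x)} \ind[h_t(z) \neq y].
\]

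Averaging this inequality over $(x,y) \in G_j$ gives $\RLoss_j(\MAJ(h_1,\dots,h_T)) \leq \frac{2}{T}\sum_{t=1}^{T} \RLoss_j(h_t)$ for every group $j \in [g]$. Plugging in the per-group average bound from \prettyref{thm:randomized-multi-robustness} (setting $T = \Theta(\ln g/\eps^2)$ as there) immediately yields $\RLoss_j(\MAJ(h_1,\dots,h_T)) \leq 2(\OPT^S_{\max}+\eps)$, which is the claim.

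There is no real obstacle here; the argument is essentially a deterministic rounding of the randomized guarantee of \prettyref{cor:interpret-avg-loss}, with the factor of $2$ being the standard price of converting an expected misclassification bound for a random base classifier into a worst-case bound for the majority vote. The only point that deserves care in the writeup is to emphasize that the adversarial perturbation $z^{\star}$ witnessing the robust error of $\MAJ(h_1,\dots,h_T)$ on $(x,y)$ is a \emph{single} point, which is why the same $z^{\star}$ can be used to certify robust errors of the $\geq T/2$ misbehaving base classifiers on $(x,y)$; otherwise one would not be able to convert pointwise disagreement on $z^{\star}$ into robust loss on $(x,y)$.
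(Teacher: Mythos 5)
Your proposal is correct and follows essentially the same route as the paper: both reduce to the per-group average bound of \prettyref{thm:randomized-multi-robustness} and use the observation that a single adversarial $z^{\star}$ witnessing a robust error of the majority vote forces at least $T/2$ base classifiers to incur a robust error on the same example, yielding the factor of $2$. The paper phrases this as a counting argument over total robustness mistakes in $G_j$ rather than as your pointwise inequality, but the two are equivalent.
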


\begin{proof}
By~\prettyref{thm:randomized-multi-robustness}, after $T=\calO(\frac{\ln g}{\eps^2})$ rounds, for each group $G_j$, $\frac{1}{T}\sum_{t=1}^T \RLoss_j(h_t)\leq \OPT^S_{\max}+\eps$. Therefore, %
the total number of robustness mistakes on $G_j$ across all the classifiers %
$h_1,\cdots,h_T$ is at most $T(\OPT^S_{\max}+\eps)|G_j|$ which is equal to $T/2\cdot 2(\OPT^S_{\max}+\eps)|G_j|$.

Therefore, the fraction of examples in $G_j$ that at least $T/2$ of the classifiers in $h_1,\cdots h_T$ make a robustness mistake on is at most $2(\OPT^S_{\max}+\eps)$. %
Hence, the fraction of examples in $G_j$ that are not robustly classified by the majority-vote classifier is at most $2(\OPT^S_{\max}+\eps)$.%
\end{proof}

\subsection{Generalization Guarantees}
\label{sec:generalization-guarantees}
In this section, we derive generalization guarantees for multi-robustness. First,~\prettyref{lem:vc-robustloss-groups} shows how to bound the VC-Dimension of the intersection of robust loss and groups. We can then invoke this Lemma to get uniform convergence guarantees that will allow us to get concentration for the conditional robust loss across groups (see \prettyref{def:multi-robustness}).

\begin{lem} [VC Dimension of Intersection of Robust Loss and Groups]
\label{lem:vc-robustloss-groups}
For any class $\calH$, any perturbation set $\calU$, and any group class $\calG$, denote the intersection function class by
\begin{align*}
&\calF^\calU_{\calH,\calG} \triangleq \{ (x,y)\mapsto \max_{z\in\calU(x)} \ind\insquare{h(z)\neq y} \wedge \ind[x\in G_j]:\\ 
&h\in\calH, G_j\in\calG \}
\end{align*}
Then, it holds that $\vc(\calF^\calU_{\calH,\calG}) \leq \Tilde{O}\inparen{\vc(\calL^{\calU}_{\calH}) + \vc(\calG)}$.
\end{lem}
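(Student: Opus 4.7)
The plan is to view $\calF^\calU_{\calH,\calG}$ as the pointwise conjunction (AND) of two binary function classes on the common domain $\calX \times \calY$, and then invoke the standard Sauer--Shelah counting bound for conjunctions of classes of bounded VC dimension.

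First, I would rewrite each $f \in \calF^\calU_{\calH,\calG}$ as $f = \ell \wedge \tilde{g}$, where $\ell \in \calL^{\calU}_\calH$ is the robust loss indicator (from \prettyref{lem:unif-robloss}, $\vc(\calL^{\calU}_\calH) \leq O(\vc(\calH)\log k)$), and $\tilde{g}$ belongs to the lifted group class $\tilde{\calG} \triangleq \{(x,y) \mapsto \ind[x \in G_j] : G_j \in \calG\}$. Since the $y$-coordinate is ignored by $\tilde{g}$, any shattered set for $\tilde{\calG}$ projects to a shattered set of $x$-values for $\calG$, so $\vc(\tilde{\calG}) = \vc(\calG)$. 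Setting $d_1 \triangleq \vc(\calL^{\calU}_{\calH})$ and $d_2 \triangleq \vc(\calG)$, the goal reduces to bounding the VC dimension of $\{\ell \wedge \tilde{g} : \ell \in \calL^{\calU}_\calH,\, \tilde{g} \in \tilde{\calG}\}$ by $\tilde{O}(d_1 + d_2)$.

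Next, I would apply a standard growth-function argument. Fix any $T \subseteq \calX \times \calY$ with $|T| = m$. By Sauer--Shelah, the number of distinct restrictions $\calL^{\calU}_\calH\big|_T$ is at most $(em/d_1)^{d_1}$ and the number of distinct restrictions $\tilde{\calG}\big|_T$ is at most $(em/d_2)^{d_2}$. Since every pattern induced on $T$ by $\calF^\calU_{\calH,\calG}$ arises as a coordinate-wise AND of one pattern from each of these two sets, the number of distinct patterns of $\calF^\calU_{\calH,\calG}$ on $T$ is at most the product
\[
\left(\tfrac{em}{d_1}\right)^{d_1}\left(\tfrac{em}{d_2}\right)^{d_2}.
\]
If $T$ is shattered by $\calF^\calU_{\calH,\calG}$, this product must be at least $2^m$. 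Taking logarithms and inverting (the routine Sauer-lemma trick, e.g.\ via $m \leq a \log m + b \Rightarrow m = O((a+b)\log(a+b))$) yields $m \leq O((d_1 + d_2)\log(d_1 + d_2))$, which is exactly $\tilde{O}(\vc(\calL^{\calU}_\calH) + \vc(\calG))$.

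There is no real obstacle here: the argument is a textbook VC bound for conjunctions of two classes, and the only thing to verify is the harmless observation that lifting $\calG$ from $\calX$ to $\calX \times \calY$ does not change its VC dimension. The mildly delicate point is keeping track of the $\log$ factors hidden inside the $\tilde{O}(\cdot)$, but this is purely bookkeeping in the Sauer-lemma inversion and introduces no new ideas.
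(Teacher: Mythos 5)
Your proposal is correct and follows essentially the same argument as the paper: bound the number of behaviors of the conjunction class on a shattered set by the product of the behaviors of $\calL^{\calU}_{\calH}$ and $\calG$, apply Sauer--Shelah to each factor, and invert to get $m = \Tilde{O}(\vc(\calL^{\calU}_{\calH}) + \vc(\calG))$. Your explicit remark that lifting $\calG$ from $\calX$ to $\calX\times\calY$ preserves its VC dimension is a harmless detail the paper leaves implicit.
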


\begin{thm}[Generalization guarantees for average multi-robustness]
\label{thm:generalization-multi-groups}
With $T=\calO(\ln g/\varepsilon^2)$ and $m= \Tilde{O}\inparen{\frac{\vc(\calH)\ln^2(k)}{\eps^4}+\frac{\vc(\calG) + \ln(1/\delta)}{\varepsilon^2}}$,
~\prettyref{alg:boosting} computes a set of hypotheses $\calH'=\{h_1,\dots, h_T\}$, such that $\forall G_j\in \calG$, 
\small{\begin{align*}
&\frac{1}{T}\sum_{t=1}^T\Prob_{(x,y)\in \calD}\Big[\exists z\in \calU(x): h_t(z)\neq y \mid x\in G_j \Big]\\
&\leq\inparen{1 + \frac{\varepsilon}{\Prob_{\calD}(x\in G_j)}}\inparen{\OPTSMax + \varepsilon}+\frac{\varepsilon}{\Prob_{\calD}(x\in G_j)}
\end{align*}}
\end{thm}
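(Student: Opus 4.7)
The plan is to combine the empirical multi-robustness guarantee from Theorem~\ref{thm:randomized-multi-robustness} with uniform convergence for the joint event ``robust mistake $\wedge$ in group'', and then recover the conditional statement by separately concentrating the group mass. Concretely, \emph{Step 1 (empirical guarantee):} invoking Theorem~\ref{thm:randomized-multi-robustness} with $T = O(\ln g/\varepsilon^2)$ yields $\calH' = \{h_1,\ldots,h_T\}$ such that, for every $G_j\in\calG$, $\frac{1}{T}\sum_{t=1}^{T}\RLoss_j(h_t) \le \OPTSMax+\varepsilon$ on the training sample $S$. This part is deterministic given that the inner Algorithm~\ref{alg:FMS} is run for $T' = O(\ln k/\varepsilon^2)$ rounds per outer iteration.

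\emph{Step 2 (reduction to a ratio):} For each $h$ and $G_j$, write
\[
\Pr_{(x,y)\sim\calD}[\exists z\in\calU(x):h(z)\ne y\mid x\in G_j] \;=\; \frac{a(h,G_j)}{b(G_j)},
\]
where $a(h,G_j)\triangleq\Pr_\calD[\exists z\in\calU(x):h(z)\ne y \wedge x\in G_j]$ and $b(G_j)\triangleq\Pr_\calD[x\in G_j]$, with empirical counterparts $\hat a,\hat b$ satisfying $\RLoss_j(h)=\hat a(h,G_j)/\hat b(G_j)$. Assuming uniform convergence $|a-\hat a|,|b-\hat b|\le\varepsilon$ over the relevant classes, the inequality $\hat b(G_j)\le b(G_j)+\varepsilon$ gives
\[
\frac{a(h_t,G_j)}{b(G_j)} \;\le\; \frac{\hat b(G_j)\RLoss_j(h_t)+\varepsilon}{b(G_j)} \;\le\; \Big(1+\tfrac{\varepsilon}{b(G_j)}\Big)\RLoss_j(h_t) + \tfrac{\varepsilon}{b(G_j)},
\]
and averaging over $t$ together with Step~1 yields exactly the claimed bound $(1+\varepsilon/b(G_j))(\OPTSMax+\varepsilon)+\varepsilon/b(G_j)$.

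\emph{Step 3 (VC bookkeeping):} It remains to justify the two uniform convergence statements and the stated sample complexity. Because each $h_t\in\calH'$ is a majority vote of $T' = O(\ln k/\varepsilon^2)$ base classifiers from $\calH$ (the output of the inner Algorithm~\ref{alg:FMS}), I apply Lemma~\ref{lem:unif-robloss} and Lemma~\ref{lem:vc-robustloss-groups} to the class $\calH''$ of such majority votes. The class $\calH''$ has VC dimension $\tilde O(\vc(\calH)\cdot T') = \tilde O(\vc(\calH)\ln k/\varepsilon^2)$ by standard bounds on majority-vote classes, so Lemma~\ref{lem:unif-robloss} gives $\vc(\calL^\calU_{\calH''}) = \tilde O(\vc(\calH)\ln^2 k/\varepsilon^2)$, and Lemma~\ref{lem:vc-robustloss-groups} in turn gives $\vc(\calF^\calU_{\calH'',\calG}) = \tilde O(\vc(\calH)\ln^2 k/\varepsilon^2 + \vc(\calG))$. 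Standard VC uniform convergence then requires $m = \tilde O(\vc(\calH)\ln^2 k/\varepsilon^4 + (\vc(\calG)+\ln(1/\delta))/\varepsilon^2)$ samples to ensure $|a-\hat a|\le\varepsilon$ simultaneously over all $h_t$ and $G_j$; concentration of $|b-\hat b|$ over $\calG$ costs only $\tilde O((\vc(\calG)+\ln(1/\delta))/\varepsilon^2)$ and is absorbed. A union bound over the two failure events then gives the full statement with probability $\ge 1-\delta$.

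\emph{Main obstacles.} The principal subtlety is that both the empirical multi-robustness guarantee and the target bound are stated in terms of \emph{conditional} losses, which are not directly amenable to uniform convergence; this forces the joint/marginal decomposition and produces the characteristic $\varepsilon/\Pr_\calD(x\in G_j)$ inflation, which is unavoidable because small groups are inherently harder to estimate. A secondary bookkeeping challenge is that each $h_t$ is itself a $T'$-way majority vote produced by Algorithm~\ref{alg:FMS}, so the effective hypothesis class for uniform convergence is $\calH''$ rather than $\calH$; carefully accounting for this extra factor of $T' = O(\ln k/\varepsilon^2)$ in the VC dimension is precisely what drives the $\varepsilon^{-4}$ (rather than $\varepsilon^{-2}$) dependence of the sample complexity on $\varepsilon$.
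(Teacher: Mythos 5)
Your proposal is correct and follows essentially the same route as the paper's proof: the same reduction from conditional to joint probabilities via uniform convergence over the intersection class $\calF^{\calU}_{\calH'',\calG}$ (Lemmas~\ref{lem:unif-robloss} and~\ref{lem:vc-robustloss-groups}), the same separate concentration of the group masses to obtain the $\bigl(1+\varepsilon/\Prob_{\calD}(x\in G_j)\bigr)$ factor, and the same accounting for the inner majority vote over $T'=O(\ln k/\varepsilon^2)$ base hypotheses that yields the $\varepsilon^{-4}$ sample complexity.
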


\begin{thm}[Generalization guarantees for $\beta$-multi-robustness]
\label{thm:generalization-multi-groups-deterministic}
With $T=\calO(\ln g/\varepsilon^2)$, \\$m=\Tilde{O}\inparen{\frac{\vc(\calH)\ln(g)\ln^2(k)}{\varepsilon^6}+\frac{\vc(\calG) + \ln(1/\delta)}{\varepsilon^2}}$, and $\beta=2$,
~\prettyref{alg:boosting} computes a set of hypotheses $\calH'=\{h_1,\dots, h_T\}$, such that $\forall G_j\in \calG$, 
\small{\begin{align*}
&\Prob_{(x,y)\in \calD}\Big[\exists z\in \calU(x): \MAJ(h_1,\dots,h_T)(z)\neq y \mid x\in G_j \Big]\\
&\leq\inparen{1 + \frac{\varepsilon}{\Prob_{\calD}(x\in G_j)}}\inparen{\beta(\OPTSMax + \varepsilon)}+\frac{\varepsilon}{\Prob_{\calD}(x\in G_j)}
\end{align*}}
\end{thm}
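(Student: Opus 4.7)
The plan is to combine the empirical $\beta$-multi-robustness guarantee of Theorem~\ref{thm:deterministic-multi-robustness} on the sample $S$ with a uniform-convergence argument, and then handle the conditioning on each group $G_j$ in the same way as in the proof of Theorem~\ref{thm:generalization-multi-groups}. Concretely, Theorem~\ref{thm:deterministic-multi-robustness} already yields $\RLoss_j(\MAJ(h_1,\dots,h_T))\leq \beta(\OPTSMax + \eps)$ for every $G_j\in\calG$ on the training sample whenever $T = \calO(\ln g / \eps^2)$ and the inner calls to \prettyref{alg:FMS} are run for $T' = \calO(\ln k / \eps^2)$ rounds so that \prettyref{lem:avg-robust-loss-upper-bound} applies. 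It remains to transfer this empirical guarantee to the population conditional bound stated.

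The technical heart of the argument is bounding the VC dimension of the class of classifiers that can be output by \prettyref{alg:boosting}. Each $h_t$ is itself a majority-vote of $T'$ base hypotheses from $\calH$, and the final classifier is a majority-vote of $T$ such $h_t$'s. Using the standard fact that the VC dimension of a $T$-fold majority-vote over a class of VC dimension $d$ is at most $\tilde{O}(dT)$, composing once for the inner and once for the outer vote gives VC dimension $\tilde{O}(TT'\vc(\calH)) = \tilde{O}(\vc(\calH)\ln(g)\ln(k)/\eps^4)$ for the effective hypothesis class $\calH_{\mathrm{MAJ}}$ of classifiers of the form $\MAJ(h_1,\dots,h_T)$. \prettyref{lem:unif-robloss} upgrades this to $\vc(\calL^{\calU}_{\calH_{\mathrm{MAJ}}}) = \tilde{O}(\vc(\calH)\ln(g)\ln^2(k)/\eps^4)$, and \prettyref{lem:vc-robustloss-groups} then gives $\vc(\calF^{\calU}_{\calH_{\mathrm{MAJ}},\calG}) = \tilde{O}(\vc(\calH)\ln(g)\ln^2(k)/\eps^4 + \vc(\calG))$. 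Standard uniform convergence for $\{0,1\}$-valued function classes implies that $m = \tilde{O}\inparen{\vc(\calH)\ln(g)\ln^2(k)/\eps^6 + (\vc(\calG) + \log(1/\delta))/\eps^2}$ samples suffice, with probability $\geq 1-\delta$, for simultaneous $\eps$-concentration of the empirical joint probability $\hat{p}_j \cdot \RLoss_j(\MAJ)$ around its population counterpart $p_j \cdot \Pr_\calD[\exists z\in\calU(x): \MAJ(z)\neq y \mid x\in G_j]$ for every $G_j\in\calG$, where $p_j = \Pr_\calD[x\in G_j]$ and $\hat{p}_j = |G_j|/m$.

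To conclude, I mimic the conditioning manipulation used in Theorem~\ref{thm:generalization-multi-groups}. Uniform convergence over the group class gives $|\hat{p}_j - p_j|\leq \eps$, and the VC bound above gives $|p_j\,\Pr_\calD[\exists z\!:\MAJ(z)\neq y\mid x\in G_j] - \hat{p}_j\,\RLoss_j(\MAJ)|\leq \eps$. Dividing through by $p_j$ and substituting the empirical bound $\RLoss_j(\MAJ) \leq \beta(\OPTSMax + \eps)$ from Theorem~\ref{thm:deterministic-multi-robustness} gives exactly the claimed inequality. The main obstacle, compared with Theorem~\ref{thm:generalization-multi-groups}, is that here uniform convergence is needed for a single data-dependent majority-vote classifier rather than for an average over the $h_t$'s individually; this forces the extra $T$-fold majority-vote layer to be absorbed into the complexity class, which accounts for the extra $\ln(g)/\eps^2$ factor in the sample complexity relative to the randomized version.
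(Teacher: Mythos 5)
Your proposal is correct and follows essentially the same route as the paper's proof: invoke \prettyref{thm:deterministic-multi-robustness} for the empirical bound, bound the VC dimension of the two-level majority-vote class $(\calH^{T'})^T$ via the composition bound, upgrade it through \prettyref{lem:unif-robloss} and \prettyref{lem:vc-robustloss-groups}, and transfer to the conditional population loss by the same $\Pr_S(x\in G_j)/\Pr_{\calD}(x\in G_j)$ manipulation. Your closing observation about why the extra $\ln(g)/\eps^2$ factor appears relative to \prettyref{thm:generalization-multi-groups} also matches the paper's reasoning.
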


\begin{rem}
In~\prettyref{sec:proof-generalization-guarantees-deterministic}, we show how to achieve generalization guarantees in terms of $\OPTDMax$ instead of $\OPTSMax$.
\end{rem}

\section{Conclusion}
We exhibited an example showing how %
using $\ERM$ on an augmented dataset to learn a robust classifier can fail when the examples are robustly un-realizable. Next, we provided a ``boosting-style'' algorithm that uses $\ERM$ and obtains strong robust learning guarantees in the non-realizable regime. 
This work provides theoretical evidence that our existing methods of learning accurate classifiers i.e. \ERM, can be modified effectively to learn robust classifiers even in the agnostic robust regime. Next, we introduced a new multi-robustness objective to obtain robustness guarantees simultaneously across a collection of subgroups. We showed this objective can be achieved by adding a second layer of boosting to the first algorithm. %

Adversarial examples exist for many types of classifiers but are especially salient with modern neural-based vision methods. 
However, due to the large capacity of these networks, it is not clear that they would benefit from boosting. Therefore, the fact that our algorithms rely on boosting should not be interpreted as a firm recommendation 
to use boosting with neural networks, but instead as a theoretical proof-of-concept that plain ERM can be used to learn robust models, given the right algorithmic scheme, especially if such a scheme can reduce the effective number of perturbations available to the adversary.

\subsection*{Acknowledgements}
This work was supported in part by the National Science Foundation under grants CCF-2212968 and ECCS-2216899, by the Simons Foundation under the Simons Collaboration on the Theory of Algorithmic Fairness, and by the Defense Advanced Research Projects Agency under cooperative agreement HR00112020003. The views expressed in this work do not necessarily reflect the position or the policy of the Government and no official endorsement should be inferred. Approved for public release; distribution is unlimited. This work was done when OM was a PhD student at the Toyota Technological Institute at Chicago.
\bibliography{ref}
\bibliographystyle{plainnat}
\clearpage
\appendix
\onecolumn
\appendix

\section{Supplementary Materials}
\subsection{Proof of Lemma~\ref{lem:opt}}
\begin{proof}
By definition of $\OPT_{S,\calH}$, it follows that 
\begin{align*}
    &\OPT_{S,\calH} = \min_{h\in \calH} \frac{1}{m}\sum_{i=1}^{m} \max_{z_i\in \calU(x_i)} \ind\insquare{h(z_i)\neq y_i}\\
    &\geq \min_{h\in \calH} \max_{z_1\in \calU(x_1),\dots,z_m\in\calU(x_m)}\frac{1}{m}\sum_{i=1}^{m} \ind\insquare{h(z_i)\neq y_i}\\
    &\geq \min_{Q\in \Delta(H)} \max_{z_1\in \calU(x_1),\dots,z_m\in\calU(x_m)}\frac{1}{m}\sum_{i=1}^{m} \Ex_{h\sim Q}\ind\insquare{h(z_i)\neq y_i}\\
    &\geq  \min_{Q\in \Delta(\calH)} \max_{\substack{P_{1}\in \Delta(\calU(x_1)),\\ \dots, \\P_{m} \in \Delta(\calU(x_m))}} \frac{1}{m} \sum_{i=1}^{m} \Ex_{z_i\sim P_i } \Ex_{h\sim Q} \ind\insquare{h(z_i)\neq y_i}.
\end{align*}
\end{proof}

\subsection{Proof of \prettyref{lem:FMS}}
\begin{proof}
By the minimax theorem and \citep*[][Equation 3 and 9 in proof of Theorem 1]{DBLP:conf/colt/FeigeMS15}, we have that 
\begin{align*}
&\max_{\substack{P_1\in \Delta(\calU(x_1)),\\ \dots,\\ P_m\in \Delta(\calU(x_m))}} \sum_{i=1}^{m} \Ex_{z_i\sim P_i} \frac{1}{T} \sum_{t=1}^{T} \ind\insquare{h_t(z_i)\neq y_i} \leq\\
&\min_{Q\in \Delta(\calH)} \max_{\substack{P_{1}\in \Delta(\calU(x_1)),\\ \dots, \\P_{m} \in \Delta(\calU(x_m))}} \Ex_{z_i\sim P_i } \Ex_{h\sim Q} \ind\insquare{h(z_i)\neq y_i}\\
&+ 2\frac{\sqrt{\mathcal{L}^*m\ln k}}{T},
\end{align*}
where $\calL^*=\sum_{i=1}^{m} \max_{z\in\calU(x_i)} \sum_{t=1}^{T}\ind\insquare{h_t(z)\neq y}$. By observing that $\calL^*\leq m T$ and dividing both sides of the inequality above by $m$, we arrive at the inequality stated in the lemma.
\end{proof}

\subsection{Proof of~\prettyref{thm:generalization-FMS}}
\label{sec:proof-thm-generalization-FMS}
\begin{proof}[Proof of~\prettyref{thm:generalization-FMS}]
Let $S\sim \calD^m$ be an iid sample from $\calD$, where the size of the sample $m$ will be determined later. By invoking \prettyref{lem:FMS} and \prettyref{lem:opt}, we observe that running \prettyref{alg:FMS} on $S$ {with corresponding weights $p_1,\cdots,p_m=1/m$} for $T$ rounds, produces $h_1,\dots, h_{T}$ satisfying

{\small
\[\max_{\substack{P_1\in \Delta(\calU(x_1)),\\ \dots,\\ P_m\in \Delta(\calU(x_m))}} \frac{1}{m}\sum_{i=1}^{m} \Ex_{z_i\sim P_i} \frac{1}{T} \sum_{t=1}^{T} \ind\insquare{h_t(z_i)\neq y_i} \leq \OPT_{S,\calH} + \frac{\varepsilon}{4}
\]
}
Next, the average robust loss for the majority-vote predictor $\MAJ(h_1,\dots, h_T)$ can be bounded from above as follows:
\begin{align*}
    &\frac{1}{m} \sum_{i=1}^{m} \max_{z_i\in \calU(x_i)} \ind\insquare{\MAJ(h_1,\dots,h_T)(z_i)\neq y_i}\\
    &\leq \frac{1}{m} \sum_{i=1}^{m} \max_{z_i\in \calU(x_i)}  2 \Ex_{t\sim [T]}\ind\insquare{h_t(z_i)\neq y_i}\\
    &= 2 \frac{1}{m} \sum_{i=1}^{m} \max_{z_i\in \calU(x_i)} \frac{1}{T} \sum_{t=1}^{T} \ind\insquare{h_t(z_i)\neq y_i}\\
    &\leq 2 \max_{\substack{P_1\in \Delta(\calU(x_1)),\\ \dots,\\ P_m\in \Delta(\calU(x_m))}} \frac{1}{m}\sum_{i=1}^{m} \Ex_{z_i\sim P_i} \frac{1}{T} \sum_{t=1}^{T} \ind\insquare{h_t(z_i)\neq y_i}\\
    &\leq 2 \OPT_{S,\calH} + \frac\varepsilon2.
\end{align*}
In the second line above, the factor $2$ shows up since for any arbitrary example (z,y), if at least half the predictors make a mistake then the majority-vote is wrong, and otherwise it is correct. The factor 2 is used as a correction so that RHS is bigger than LHS, where the edge case is exactly when half the predictors make a mistake.

Next, we invoke \prettyref{lem:unif-robloss} to obtain a uniform convergence guarantee on the robust loss. In particular, we apply \prettyref{lem:unif-robloss} on the \emph{convex-hull} of $\calH$: $\calH^{T} = \{\MAJ(h_1,\dots, h_T): h_1,\dots, h_T\in \calH\}$. By a classic result due to \citet{blumer:89}, it holds that $\vc(\calH^T)=O(\vc(\calH)T\ln T)$. Combining this with \prettyref{lem:unif-robloss} and plugging-in the value of $T= \frac{32 \ln k}{\varepsilon^2}$, we get that the VC dimension of the robust loss class of $\calH^T$ is bounded from above by
\[\vc(\calL_{\calH^T}^\calU) \leq O\inparen{\frac{\vc(\calH)(\ln k)^2}{\varepsilon^2}\ln\inparen{\frac{\ln k}{\varepsilon^2}}}.\]
Finally, using Vapnik's ``General Learning'' uniform convergence \citep{vapnik:82}, with probability at least $1-\delta$ over $S\sim \calD^m$ where $m =  O\inparen{\frac{\vc(\calH)(\ln k)^2}{\varepsilon^4}\ln \inparen{\frac{\ln k}{\varepsilon^2}}+\frac{\ln(1/\delta)}{\varepsilon^2}}$, it holds that
\begin{align*}
&\forall f\in \calH^T: \Ex_{(x,y)\sim \calD} \insquare{\max_{z\in\calU(x)}\ind\insquare{f(z)\neq y}}\\
&\leq \frac{1}{m}\sum_{i=1}^{m} \max_{z_i\in\calU(x_i)}\ind\insquare{f(z_i)\neq y_i} + \frac\varepsilon4
\end{align*}
This also applies to the particular output $\MAJ(h_1,\dots, h_T)$ of \prettyref{alg:FMS}, and thus
\begin{align*}
    &\Ex_{(x,y)\sim \calD} \insquare{ \max_{z\in \calU(x)} \ind\insquare{\MAJ(h_1,\dots, h_{T(\varepsilon)})(z)\neq y} }\\ 
    &\leq \frac{1}{m} \sum_{i=1}^{m} \max_{z_i\in \calU(x_i)} \ind\insquare{\MAJ(h_1,\dots,h_T)(z_i)\neq y_i} + \frac{\varepsilon}{4}\\
    &\leq 2\OPT_{S,\calH} + \frac\varepsilon2 + \frac\varepsilon4.
\end{align*}

Finally, by applying a standard Chernoff-Hoeffding concentration inequality, we get that $\OPT_{S,\calH} \leq \OPT_\calH + \frac\varepsilon8$. Combining this with the above inequality concludes the proof.
\end{proof}
\subsection{Proof of Lemma~\ref{lem:extension-FMS-weights}}
\begin{proof}
We generalize the argument in \citet*{DBLP:conf/colt/FeigeMS15} to accommodate the weights on the samples $p_1,\dots,p_m$. %
Specifically, let
\[L^{ON}_T = \sum_{t=1}^{T}\sum_{i=1}^{m}\sum_{z \in \calU(x_i)} p_iP^{t}(z,(x_i,y_i)) \ind\insquare{h_{t}(z)\neq y_i}\]
be the loss of \prettyref{alg:FMS} after $T$ rounds, and let 
\[L^*=\max_{P} \sum_{t=1}^{T}\sum_{i=1}^{m}\sum_{z \in \calU(x_i)} p_iP(z,(x_i,y_i)) \ind\insquare{h_{t}(z)\neq y_i}\]
be the benchmark loss. We show that $L^*(1-\eta)-\frac{\ln k}{\eta}\leq L^{ON}_T$. %

To this end, define $W^t_i=\inparen{\sum_{z\in\calU(x_i)}w_t(z,(x_i,y_i))}^{p_i}$ and $W^t=\prod_{i=1}^{m}W^t_i$. Let
\begin{align*}
&F^t_{i} = p_i \cdot \frac{\sum_{z\in\calU(x)} w_t(z,(x,y))\ind\insquare{h_t(z)\neq y}}{\sum_{z\in\calU(x)} w_t(z,(x,y))} \\
&= p_i \sum_{z\in \calU(x_i)} P^{t}(z,(x_i,y_i))\ind\insquare{h_t(z)\neq y}
\end{align*}
be the loss of \prettyref{alg:FMS} on example $(x_i,y_i)$ at round $t$. Observe that by the Step \sareplace{6}{7} in \sareplace{\prettyref{alg:weighted-FMS}}{\prettyref{alg:FMS}}, it holds that $W^T_i\geq (1+\eta)^{p_i \max_{z\in \calU(x_i)}\sum_{t=1}^{T} \insquare{h_t(z)\neq y}}$, and therefore $W^T\geq (1+\eta)^{L^*}$.

Observe also
{\small
\begin{align*}
&W^{t+1}_{i} = \\
&\inparen{\sum_{z:\insquare{h_t(z)\neq y}=0} w_t(z,(x,y)) + \sum_{z:\insquare{h_t(z)\neq y}=1} (1+\eta)w_{t}(z,(x,y)) }^{p_i} \\
&= W^t_i\inparen{1+\eta \frac{F^t_{i}}{p_i}}^{p_i}
\end{align*}
}

This implies that
\begin{align*}
&W^T=\prod_{i=1}^{m} W^T_i = \prod_{i=1}^{m} \insquare{k \prod_{t=1}^{T} \inparen{1+\eta \frac{F^t_{i}}{p_i}}}^{p_i} = \\
&k^{\sum_{i=1}^{m}p_i} \prod_{i=1}^{m}\prod_{t=1}^{T} \inparen{1+\eta \frac{F^t_{i}}{p_i}}^{p_i}
\end{align*}

Combining the above we have,
\[(1+\eta)^{L^*} \leq k\prod_{i=1}^{m}\prod_{t=1}^{T} \inparen{1+\eta \frac{F^t_{i}}{p_i}}^{p_i}.\]
We then apply a logarithmic transformation on both sides
\[L^{*}\ln(1+\eta) \leq \ln k + \sum_{i=1}^{m}\sum_{t=1}^{T} p_i\ln\inparen{1+\eta \frac{F^t_{i}}{p_i}}.\]
Since $a-a^2\leq \ln(1 + a) \leq a$ for $a\geq 0$, we have
\[L^*(\eta-\eta^2)\leq \ln k + \sum_{i=1}^{m}\sum_{t=1}^{T} \eta F^t_i = \ln k+ \eta L^{ON}_{T}.\]
By dividing by $\eta$ and rearranging terms we get $L^*(1-\eta)-\frac{\ln k}{\eta}\leq L^{ON}_T$. 

By setting $\eta=\sqrt{\frac{\ln k}{L^*}}$ and observing that $L^*\leq T$, the remainder of the analysis follows similar to \citet*[][Equation 3-10 in proof of Theorem 1]{DBLP:conf/colt/FeigeMS15}.
\end{proof}

\subsection{Proof of Lemma~\ref{lem:unif-robloss}}
\begin{proof}
By finiteness of $\calU$, observe that 
for any dataset $S\in (\calX\times \calY)^m$, each robust loss vector in the set of robust loss behaviors:
$$ \Pi_{\calL^{\calU}_{\calH}}(S) = \{(f(x_1,y_1),\dots, f(x_m,y_m)): f \in \calL^{\calU}_{\calH}\}$$ maps to a 0-1 loss vector on the \emph{inflated set} 
$S_\calU=\{(z^1_1,y_1),\dots, (z^k_1,y_1), \dots, (z_m^1,y_m),\dots, (z_m^k,y_m)\}$, 
{\small
\[\Pi_{{\calH}}(S_\calU)= \{(h(z^1_1),\dots, h(z^k_1),\dots, h(z_m^1),\dots, h(z_m^k)): h\in \calH\}\]
}
Therefore, it follows that $\abs{\Pi_{\calL^{\calU}_{\calH}}(S)}\leq \abs{\Pi_{{\calH}}(S_\calU)}$. Then, by applying the Sauer-Shelah lemma, it follows that $\abs{\Pi_{{\calH}}(S_\calU)} \leq O((mk)^{\vc(\calH)})$. Then, by solving for $m$ such that $O((mk)^{\vc(\calH)}) \leq 2^m$, we get that $\vc(\calL^{\calU}_{\calH})\leq O(\vc(\calH) \log(k))$.
\end{proof}

\subsection{Proof of~\prettyref{lem:avg-robust-loss-upper-bound}}

\begin{proof}
{\small
\begin{align}
&\E_{j\in [g]}[\ell^{rob}_j(h_t)]= \sum_j P_j^t(1/|G_j|)\sum_{(x,y)\in G_j} \max_{z\in \calU(x)}\ind\insquare{h_t(z)\neq y}\label{eqn:avg-robust-loss-def}\\
&=\sum_{i=1}^m p_i \cdot \max_{z\in\calU(x)}\ind\insquare{h_t(z)\neq y} \label{eqn:distribution-samples}\\
&\leq  \max_{\substack{P'_1\in \Delta(\calU(x_1)),\\ \dots,\\P'_m\in \Delta(\calU(x_m))}} \sum_{i=1}^{m} p_i\cdot \Ex_{z_i\sim P'_i} \frac{1}{T} \sum_{\tau=1}^{T} \ind\insquare{h^{\text{FMS}}_{\tau}(z_i)\neq y_i}\label{ref:replace-FMS}\\
&\leq \min_{Q\in \Delta(\calH)} \max_{\substack{P'_{1}\in \Delta(\calU(x_1)),\\ \dots, \\P'_{m} \in \Delta(\calU(x_m))}} \sum_{i=1}^{m} p_i \Ex_{z_i\sim P'_i } \Ex_{h\sim Q} \ind\insquare{h(z_i)\neq y_i} + 2\sqrt{\frac{\ln k}{T}}\label{ref:lem-weigthed-FMS}\\
&\leq \min_{h\in \calH} \max_{\substack{P'_{1}\in \Delta(\calU(x_1)),\\ \dots, \\P'_{m} \in \Delta(\calU(x_m))}} \sum_{i=1}^{m} p_i\cdot \Ex_{z_i\sim P'_i } \ind\insquare{h(z_i)\neq y_i}+ 2\sqrt{\frac{\ln k}{T}}\\
&= \min_{h\in\calH} \max_{\substack{z_1\in \calU(x_1),\\ \dots, \\z_{m} \in \calU(x_m)}} \sum_{i=1}^{m} p_i\cdot   \ind\insquare{h(z_i)\neq y_i} + 2\sqrt{\frac{\ln k}{T}}\label{ineq:pure-strategy-suffices}\\
&\leq \min_{h\in\calH} \max_{j\in[g]} (1/|G_j|)\sum_{(x,y)\in G_j}\max_{z\in\calU(x)}\ind\insquare{h(z)\neq y}+2\sqrt{\frac{\ln k}{T}}\label{ineq:relate-to-opt-max}\\
&= OPT_{\max}+2\sqrt{\frac{\ln k}{T}}
\end{align}
}

\prettyref{eqn:avg-robust-loss-def} holds by plugging in the definition of $\RLoss_j(h_t)$(\prettyref{eqn:unweighted-robust-loss}).~\prettyref{eqn:distribution-samples} holds for a distribution $p_1,\dots,p_m$ on the samples. In~\Cref{ref:replace-FMS}, $h_t$ is replaced with the hypothesis selected by\sareplace{~\prettyref{alg:weighted-FMS}}{~\prettyref{alg:FMS}} in each round $t$.~\Cref{ref:lem-weigthed-FMS} holds by~\prettyref{lem:extension-FMS-weights}.~\Cref{ineq:pure-strategy-suffices} holds since it suffices for the max-player to pick a pure strategy.~\Cref{ineq:relate-to-opt-max} holds since the whole probability mass is put as a uniform distribution on the worst-off group. Note that when defining $p_1,\cdots,p_m$, all individuals that belong to the same group have equal weights. 
\end{proof}

\subsection{Proof of~\prettyref{cor:interpret-avg-loss}}
\begin{proof}
Expected robust loss on each group $G_j\in \calG$ is:
\begin{align}
&\frac{1}{|G_j|}\sum_{(x,y)\in G_j} \max_{z\in\calU(x)}\frac{1}{T}\sum_{t=1}^T \ind[h_t(z)\neq y]\\
=&\frac{1}{|G_j|}\sum_{(x,y)\in G_j} \max_{z\in\calU(x)}\Ex_{h_t\sim U(\calH')} \ind[h_t(z)\neq y]\\
\leq&\frac{1}{|G_j|}\sum_{(x,y)\in G_j} \Ex_{h_t\sim U(\calH')} \max_{z\in \calU(x)}\ind[h_t(z)\neq y]\label{eqn:Jensen-avg-robustness}\\
=&\frac{1}{T}\sum_{t=1}^T \frac{1}{|G_j|}\sum_{(x,y)\in G_j} \max_{z\in\calU(x)}\ind[h_t(z)\neq y]\\
=&\frac{1}{T}\sum_{t=1}^T \RLoss_j(h_t)\leq \OPTSMax+\eps \label{eqn:thm-avg-robust-loss}
\end{align}
Where~\prettyref{eqn:Jensen-avg-robustness} holds by Jensen's inequality and~\prettyref{eqn:thm-avg-robust-loss} holds by~\prettyref{thm:randomized-multi-robustness}.
\end{proof}

\subsection{Reduction from overlapping groups to disjoint groups}
\label{sec:overlapping-groups-reduction}
When the groups are overlapping, we reduce it to the case of disjoint groups. The reduction is as follows: for an input instance $\calI(\calG=\{G_1,\dots,G_g\}, S)$ of overlapping groups, create a new instance $\calI'(\calG'=\{G'_1,\dots,G'_g\},S')$ as follows. Initially, for all $G'_j\in \calG'$, $G'_j$ is an empty set. For each example $(x_i,y_i)\in S$ that belongs to a set of groups $\calG_i=\{G_{i,1},\cdots, G_{i,|\calG_i|}\}\subseteq \calG$ in $\calI$, create identical copies of $(x_i,y_i)$ and assign each copy including the original example to exactly one of the groups in $\calG'_i=\{G'_{i,1},\cdots, G'_{i,|\calG'_i|}\}$. Now we have an instance $\calI'$ with disjoint groups. By executing~\prettyref{alg:boosting} on $\calI'$, it returns a predictor $h$ that achieves %
a $\beta$-multi-robustness guarantee. First, we argue that if $h$ is used on $\calI$, it achieves a multi-robustness guarantee of $\beta\cdot \OPT^{\calI'}_{\max}$. This is the case since either $h$ makes a robustness mistake on all copies of an example or does not make any robustness mistakes on any of them. Next, we show that $\OPT^{\calI'}_{\max}\leq \OPT^{\calI}_{\max}$. Consider a predictor $h^*\in \calH$ that achieves multi-robustness of $\OPT^{\calI}_{\max}$ on $\calI$. If $h^*$ is used on $\calI'$, for each example $(x,y)\in S$ that $h^*$ has zero robust loss on, it does not make any mistakes on any of its copies in $\calI'$. Additionally, if $h^*$ makes a robustness mistake on $(x,y)$, then it makes a robustness mistake on all its copies in $\calI'$. Thus, $h^*$ achieves a multi-robustness guarantee of $\OPT^{\calI}_{\max}$ on $\calI'$. Therefore, $\OPT^{\calI'}_{\max}\leq \OPT^{\calI}_{\max}$, and a $\beta\cdot \OPT^{\calI'}_{\max}$ multi-robustness guarantee on $\calI$ implies $\beta\cdot \OPT^{\calI}_{\max}$ multi-robustness. A similar argument holds for the average multi-robustness guarantee.

\begin{rem}
When $|\calG|$ is large, this reduction becomes computationally inefficient, since in the worst case, the number of samples gets increased by a multiplicative factor of $|\calG|$. However, this reduction is equivalent to keeping only one copy of each sample $(x_i,y_i)\in S$ and when executing~\prettyref{alg:boosting}, in each iteration $t$, assigning it a weight of $p_i=\sum_{j\in[g]:(x_i,y_i)\in G_j}P^t_j/|G_j|$.
\end{rem}
\subsection{Proof of~\prettyref{lem:vc-robustloss-groups}}
\begin{proof}
The proof is inspired by the proof of \citep[claim B.1 in ][]{DBLP:conf/icml/KearnsNRW18} which proved a similar result for the standard $0$-$1$ loss, and here we extend the result to the robust loss using essentially the same proof. 

Let $S\subseteq \calX \times \calY$ be a dataset of size $m$ that is shattered by $\calF^\calU_{\calH,\calG}$. Then, observe that, by definition of $\calF^\calU_{\calH,\calG}$, the number of possible behaviors $\abs{\Pi_{\calF^\calU_{\calH,\calG}}(S)}$ is at most $\abs{\Pi_{\calL^{\calU}_{\calH}}(S)}\cdot \abs{\Pi_\calG(S)}$. By Sauer-Shelah Lemma, $\abs{\Pi_{\calL^{\calU}_{\calH}}(S)} \leq O(m^{\vc(\calL^{\calU}_{\calH})})$ and $ \abs{\Pi_\calG(S)}\leq O(m^{\vc(\calG)})$. Thus, $\abs{\Pi_{\calF^\calU_{\calH,\calG}}(S)} = 2^m \leq O(m^{\vc(\calL^{\calU}_{\calH})+\vc(\calG)})$, and solving for $m$ yields that $m=\Tilde{O}(\vc(\calL^{\calU}_{\calH})+\vc(\calG))$. Hence, $\vc(\calF^\calU_{\calH,\calG}) \leq \Tilde{O}\inparen{\vc(\calL^{\calU}_{\calH}) + \vc(\calG)}$.
\end{proof}

\subsection{Proof of~\prettyref{thm:generalization-multi-groups}}
\label{sec:proof-generalization-guarantees}

\begin{proof}
The output of~\prettyref{alg:boosting} is $\calH'=\{h_1,\dots,h_T\}$ where each of the predictors $h_1,\dots,h_T$ is a majority-vote predictor over $\calH$. Due to \citet{blumer:89}, the VC-dimension of the output space is $\vc(\calH^{T'})=\Big(\vc(\calH)T'\ln T'\Big)$ where $T'$ is the number of rounds of\sareplace{~\prettyref{alg:weighted-FMS}}{~\prettyref{alg:FMS}} in each oracle call.

Set $m= \Tilde{O}\inparen{\frac{\vc(\calH^{T'})\ln (k)+\vc(\calG) + \ln (1/\delta)}{\varepsilon^2}}$.  
By setting $T'=\calO(\frac{\ln k}{\eps^2})$ and by invoking \prettyref{lem:unif-robloss} and \prettyref{lem:vc-robustloss-groups} on the hypothesis class $\calH$ and group class $\calG$, we get the following uniform convergence guarantee. With probability at least $1-\delta$ over $S\sim \calD^m$,

\begin{align*}
&\inparen{\forall h\in \calH^{T'}}\inparen{\forall G_j\in\calG}:\\
&\Bigg\lvert\Ex_{(x,y)\sim \calD} \insquare{\ind[x\in G_j] \wedge \max_{z\in \calU(x)}\ind[h(z)\neq y] } - \frac{1}{m}\sum_{(x,y)\in S} \ind[x\in G_j] \wedge \max_{z\in \calU(x)}\ind[h(z)\neq y]\Bigg\rvert \leq \varepsilon
\end{align*}

We can rewrite the above guarantee in a conditional form which will be useful for us shortly in the proof. Namely, $\forall h\in \calH^{T'}, \forall G_j\in\calG$:
\begin{align}
    &\Prob_{(x,y)\sim \calD}\insquare{\exists z\in \calU(x): h(z)\neq y | x\in G_j } 
    \leq \frac{\Prob_{S}(x\in G_j)}{\Prob_{\calD}(x\in G_j)} \frac{1}{|G_j|} \sum_{(x,y)\in S\wedge x\in G_j}\max_{z\in \calU(x)}\ind[h(z)\neq y]
    + \frac{\varepsilon}{\Prob_{D}(x\in G_j)}\label{eqn:conditional-uniform-convergence}
\end{align}
where $|G_j|=\sum_{(x,y)\in S} \ind[x\in G_j]$. 

\prettyref{thm:avg-empirical-boosting} shows that running \prettyref{alg:boosting} produces hypotheses $h_1,\dots, h_T$ such that, $\forall G_j\in \calG$:
\begin{align}
   &\frac{1}{T}\sum_{t=1}^{T} \frac{1}{|G_j|}\sum_{(x,y)\in S\wedge x\in G_j} \max_{z\in \calU(x)}\ind[h_t(z)\neq y] \leq \OPTSMax + \varepsilon\label{eqn:thm-alg-boosting-avg}
\end{align}

\prettyref{eqn:conditional-uniform-convergence} implies that $\forall G_j\in\calG$,
{\small\begin{align}
    \frac{1}{T}\sum_{t=1}^T\Prob_{(x,y)\sim \calD}\insquare{\exists z\in \calU(x): h_t(z)\neq y | x\in G_j } \leq 
    \frac{1}{T}\sum_{t=1}^T\frac{\Prob_{S}(x\in G_j)}{\Prob_{\calD}(x\in G_j)} \frac{1}{|G_j|} \sum_{(x,y)\in S\wedge x\in G_j}\max_{z\in \calU(x)}\ind[h_t(z)\neq y]
    +\frac{\varepsilon}{\Prob_{\calD}(x\in G_j)},\label{eqn:avg-uniform-convergence-conditional}
\end{align}}

Combining~\prettyref{eqn:thm-alg-boosting-avg} and~\prettyref{eqn:avg-uniform-convergence-conditional} implies:

\begin{align}
&\frac{1}{T}\sum_{t=1}^T\Prob_{(x,y)\in \calD}\insquare{\exists z\in \calU(x): h_t(z)\neq y | x\in G_j } 
\leq\frac{\Prob_{S}(x\in G_j)}{\Prob_{\calD}(x\in G_j)} \inparen{\OPTSMax + \varepsilon}+\frac{\varepsilon}{\Prob_{\calD}(x\in G_j)}
\label{eqn:avg-generalization-prob-sample}
\end{align}

Now, given additional samples $\tilde{m}= O\inparen{\frac{\vc(\calG)+\log(2/\delta)}{\varepsilon^2}}$, in addition to the above, we can guarantee that:
\begin{align}
\forall G_j\in \calG: \frac{\Prob_{S}(x\in G_j)}{\Prob_{\calD}(x\in G_j)}
 \leq \frac{\Prob_{\calD}(x\in G_j) + \varepsilon}{\Prob_{\calD}(x\in G_j)} = 1 + \frac{\varepsilon}{\Prob_{\calD}(x\in G_j)}.
\label{eqn:avg-generalization-prob-distribution}
\end{align}

Combining~\prettyref{eqn:avg-generalization-prob-sample} and~\prettyref{eqn:avg-generalization-prob-distribution} implies that:

\begin{align*}
\frac{1}{T}\sum_{t=1}^T\Prob_{(x,y)\sim \calD}\insquare{\exists z\in \calU(x): h_t(z)\neq y | x\in G_j } 
\leq \inparen{1 + \frac{\varepsilon}{\Prob_{\calD}(x\in G_j)}}\inparen{\OPTSMax + \varepsilon}+\frac{\varepsilon}{\Prob_{\calD}(x\in G_j)}
\end{align*}

which completes the proof. We can also obtain a bound in terms of $\OPTDMax$ instead of $\OPTSMax$ using a similar approach used in~\prettyref{sec:proof-generalization-guarantees-deterministic}.

\end{proof}

\subsection{Proof of~\prettyref{thm:generalization-multi-groups-deterministic}}
\label{sec:proof-generalization-guarantees-deterministic}

\begin{proof}
The output of~\prettyref{alg:boosting} is $\calH'=\{h_1,\dots,h_T\}$. Taking majority-vote over the predictors in $\calH'$ is equivalent to taking the majority-vote of majority-vote predictors over $\calH$. Therefore, due to \citet{blumer:89}, the VC-dimension of the output space is $\vc(\calH^{T'})^T=\Big(\vc(\calH)T'\ln T'\Big)T\ln T$, where $T'$ is the number of rounds of\sareplace{~\prettyref{alg:weighted-FMS}}{~\prettyref{alg:FMS}} in each oracle call and $T$ is the number of rounds of~\prettyref{alg:boosting}.

Let the sample size $m= \Tilde{O}\inparen{\frac{\vc(\calH^{T'})^T\log(k)+\vc(\calG) + \log(1/\delta)}{\varepsilon^2}}$. By setting %
$T=\calO(\ln g/\eps^2)$ and $T'=\calO(\frac{\ln k}{\eps^2})$ and by invoking \prettyref{lem:unif-robloss} and \prettyref{lem:vc-robustloss-groups} on the hypothesis class $\calH$ and group class $\calG$, we get the following uniform convergence guarantee. With probability at least $1-\delta$ over the sample set $S\sim \calD^m$, $\forall h\in (\calH^{T'})^T$ and $\forall G_j\in\calG$:
\begin{align}
    &\Bigg\lvert\Ex_{(x,y)\sim \calD} \insquare{\ind[x\in G_j] \wedge \max_{z\in \calU(x)}\ind[h(z)\neq y] } 
    -\frac{1}{m}\sum_{(x,y)\in S} \ind[x\in G_j] \wedge \max_{z\in \calU(x)}\ind[h(z)\neq y]\Bigg\rvert
    \leq \varepsilon\label{eqn:uniform-convergence-deterministic-eq1}
\end{align}

We can rewrite the above guarantee in a conditional form which will be useful for us shortly in the proof. Namely, $\forall h\in (\calH^{T'})^T$ and $\forall G_j\in\calG$:
\begin{align}
&\Prob_{(x,y)\sim \calD}\insquare{\exists z\in \calU(x): h(z)\neq y | x\in G_j }
\leq \frac{\Prob_{S}(x\in G_j)}{\Prob_{\calD}(x\in G_j)} \frac{1}{|G_j|} \sum_{(x,y)\in S\wedge x\in G_j}\max_{z\in \calU(x)}\ind[h(z)\neq y] 
+ \frac{\varepsilon}{\Prob_{\calD}(x\in G_j)}\label{eqn:uniform-convergence-deterministic-proof}
\end{align}
where $|G_j|=\sum_{(x,y)\in S} \ind[x\in G_j]$.
\prettyref{thm:deterministic-multi-robustness} provides that %
$h^{\text{maj}}=\MAJ(h_1,\dots,h_T)$ satisfies that $\forall G_j\in \calG$:
\begin{align}
   &\frac{1}{|G_j|}\sum_{(x,y)\in S\wedge x\in G_j} \max_{z\in \calU(x)}\ind[h^{\text{maj}}(z)\neq y] \leq \beta (\OPTSMax + \varepsilon)\label{eqn:guarantee-deterministic-eq1}
\end{align}
Combining~\prettyref{eqn:uniform-convergence-deterministic-proof} and~\prettyref{eqn:guarantee-deterministic-eq1} implies that $\forall G_j\in \calG$:
\begin{align}
&\Prob_{(x,y)\sim \calD}\insquare{\exists z\in \calU(x): h^{\text{maj}}(z)\neq y | x\in G_j }
\leq \frac{\Prob_{S}(x\in G_j)}{\Prob_{\calD}(x\in G_j)} \inparen{\beta(\OPTSMax + \varepsilon)}+\frac{\varepsilon}{\Prob_{\calD}(x\in G_j)}
\label{eqn:deterministic-robustness-eq2}
\end{align}

Now, given additional samples $\tilde{m}= O\inparen{\frac{\vc(\calG)+\log(2/\delta)}{\varepsilon^2}}$, guarantees that:
\begin{align} 
\forall G_j\in \calG: \frac{\Prob_{S}(x\in G_j)}{\Prob_{\calD}(x\in G_j)} \leq \frac{\Prob_{\calD}(x\in G_j) + \varepsilon}{\Prob_{\calD}(x\in G_j)} = 1 + \frac{\varepsilon}{\Prob_{\calD}(x\in G_j)}
\label{eqn:deterministic-robustness-eq3}
\end{align}

Combining~\prettyref{eqn:deterministic-robustness-eq2} and~\prettyref{eqn:deterministic-robustness-eq3} gives a refined bound on the average conditional robust loss that holds uniformly across groups. Namely, $\forall G_j\in \calG$,
\begin{align*}
& \Prob_{(x,y)\sim \calD}\insquare{\exists z\in \calU(x): h^{\text{maj}}(z)\neq y | x\in G_j }
\leq\inparen{1 + \frac{\varepsilon}{\Prob_{\calD}(x\in G_j)}}\inparen{\beta(\OPTSMax + \varepsilon)}+\frac{\varepsilon}{\Prob_{\calD}(x\in G_j)}
\end{align*}
We can also obtain a guarantee in terms of $\OPTDMax$ instead of $\OPTSMax$, as follows. Let $h^*\in \calH$ be a predictor which attains $\OPTDMax$ defined as 
{\small
\[\OPTDMax = \min_{h\in\calH} \max_{G_j\in\calG} \Ex_{(x,y)\sim \mathcal{D}}\insquare{\max_{z\in \calU(x)} \ind[h(z)\neq y] \bigg| x\in G_j}.\]
}
Dividing both sides of \prettyref{eqn:uniform-convergence-deterministic-eq1} by $\Prob_{S}(x\in G_j)$ provides that $\forall G_j\in\calG, \forall h\in\calH$:

\begin{align*}
&\Bigg\lvert\frac{\Prob_{\calD}(x\in G_j)}{\Prob_{S}(x\in G_j)} \Prob_{(x,y)\in \calD}\insquare{\exists z\in \calU(x): h(z)\neq y | x\in G_j } - \Prob_{(x,y)\in S}\insquare{\exists z\in \calU(x): h(z)\neq y | x\in G_j }\Bigg\rvert%
\leq \frac{\varepsilon}{\Prob_{S}(x\in G_j)}
\end{align*}

and thus it implies that %
\begin{align*}
&\Prob_{(x,y)\in S}\insquare{\exists z\in \calU(x): h(z)\neq y | x\in G_j }\leq 
\inparen{1+\frac{\varepsilon}{\Prob_{S}(x\in G_j)}}\Prob_{(x,y)\sim \calD}\insquare{\exists z\in \calU(x): h(z)\neq y | x\in G_j }
+ \frac{\varepsilon}{\Prob_{S}(x\in G_j)}
\end{align*}

Supposing that $\forall G_j\in\calG$, $\Prob_{S}(x\in G_j)\geq \gamma$. By taking a max over groups $G_j \in \calG$, we get
\[\OPTSMax \leq (1+\frac{\varepsilon}{\gamma}) \OPTDMax +\frac{\varepsilon}{\gamma}.\]

\end{proof}

\end{document}